\theoremstyle{plain}
\newtheorem{theorem}{Theorem}[section]
\newtheorem{lemma}[theorem]{Lemma}
\newtheorem{proposition}[theorem]{Proposition}
\crefname{theorem}{Theorem}{Theorem}
\crefname{lemma}{Lemma}{Lemma}
\crefname{corollary}{Corollary}{Corollary}
\crefname{proposition}{Proposition}{Proposition}
\theoremstyle{definition}
\newtheorem{definition}{Definition}[section]
\theoremstyle{remark}
\newtheorem{remark}{Remark}[section]
\newtheorem{notation}{Notation}[section]
\numberwithin{algorithm}{section}
\newcommand{\N}{\ensuremath{\mathbb{N}}}
\newcommand{\R}{\ensuremath{\mathbb{R}}}
\newcommand{\E}{\ensuremath{\mathbb{E}}}
\renewcommand{\v}{\ensuremath{\boldsymbol}}	
\newcommand{\revtwo}[1]{#1}
\newcommand{\blue}[1]{#1}
\begin{document}


\title{\blue{ATE-SG: Alternate Through the Epochs Stochastic Gradient for Multi-Task Neural Networks}}

\author{
\name{Stefania Bellavia\textsuperscript{a}\textsuperscript{b}\thanks{CONTACT: S. Bellavia. Email: stefania.bellavia@unifi.it},
Francesco {Della Santa}\textsuperscript{c}\textsuperscript{b},  
Alessandra Papini\textsuperscript{a}\textsuperscript{b}
}
\affil{
\textsuperscript{a}Dipartimento di Ingegneria Industriale, Universit\`a degli Studi di Firenze, Florence, Italy;\\
\textsuperscript{b}Gruppo Nazionale per il Calcolo Scientifico INdAM, Piazzale Aldo Moro 5, 00185, Rome, Italy;\\
\textsuperscript{c}Dipartimento di Scienze Matematiche, Politecnico di Torino, Turin, Italy
}
}

\maketitle

\begin{abstract}
This paper introduces novel alternate training procedures for hard-parameter sharing Multi-Task Neural Networks (MTNNs). Traditional MTNN training faces challenges in managing conflicting loss gradients, often yielding sub-optimal performance. The proposed alternate training method updates shared and task-specific weights alternately \blue{through the epochs}, exploiting the multi-head architecture of the model. This approach reduces computational costs \blue{per epoch and memory requirements}.  Convergence properties similar to those of the classical stochastic gradient method are established. 
Empirical experiments demonstrate \blue{enhanced training regularization}
and reduced computational demands. In summary, our alternate training procedures offer a promising advancement for the training of hard-parameter sharing MTNNs.
\end{abstract}

\begin{keywords}
Alternate Stochastic Gradient; Multi-Task Learning; Neural Networks; Deep Learning
\end{keywords}

\textbf{MSC Codes.}  
49M37, 
65K05, 
68T05, 
68W40, 
90C15. 


\section{Introduction}\label{sec:intro}

Multi-Task Learning (MTL) consists of jointly learning multiple tasks rather than
individually, in such a way that the knowledge obtained by learning a task can be exploited for learning other tasks, hopefully improving the generalization performance of all the tasks at hand \cite{MTL-survey1}.
For the case of Neural Networks (NNs), MTL is approached by building NN architectures characterized by multiple output layers, one for each task, connected to (at least) one shared input layer; then, Multi-Task NNs (MTNNs) are characterized by an inherent layer sharing property and, historically, can be divided into \emph{hard-parameter sharing} MTNNs and \emph{soft-parameter sharing} MTNNs \cite{MTL-survey}. In this work, we focus on the hard-parameter sharing case, i.e., on MTNNs characterized by a so-called multi-head design architecture, where a first block of shared layers connects the inputs to multiple task-specific blocks of layers (see \Cref{fig:multitaskNN_example}). Summarizing, the idea behind these MTNNs is to build a shared encoder that branches out into multiple task-specific decoders \cite{MTL-survey} (e.g., see \cite{Kokkinos_2017_CVPR,multinet_2018,Jagannath2021_MTL,Jagannath2021_arXiv,Bragman2019}).

The NN model is generally trained to simultaneously make predictions for all tasks, where the loss is a weighted sum of all the task-specific loss functions ({\it aggregate loss function}) \cite{MTL-survey1}. This approach presents several difficulties. Descent directions of different loss functions at the current iterate
may conflict and the direction used to update the NN parameters may produce an increase of a single loss despite the aggregate loss function decreases. Then, this approach often yields lower performance than its corresponding single-task counterparts \cite{PascalMichiardi2021_MTLalt}. 
Different approaches have been proposed by researchers to overcome these difficulties and obtain more robust procedures. These approaches can be divided into two main types: 
$i$) modify the training procedure by
considering also the gradients of the task-specific losses, and/or by training all or a subset of the weights with respect to single tasks,
e.g. see \cite{Strezoski_2019_ICCV,Maninis2019,PascalMichiardi2021_MTLalt,MTL-bargain,vicentejota2023}; 
$ii$) adaptively choose a good setting of the weights in the aggregate loss function for a good balance of the magnitudes of the task-specific losses, e.g. see \cite{MTL_UncWeightLoss,GradNorm,MTL_DynamicTaskPrior,vicenteannor2021,vicentecms2022,mercierdesideri2018}. Among these, papers \cite{vicenteannor2021, mercierdesideri2018} deal with a general multiobjective problem, of which the MTNN training is a special case, and aim at approximating the entire Pareto front. In order to adaptively compute the aggregate loss weights these approaches require the solution of a minimization subproblem at each iteration.

Further, we also recall the Block Coordinate Descent (BCD) method, where the parameters of the MTL model are partitioned and updated with respect to the corresponding subproblems (see \cite{MTL-survey1} and the references there-in); especially in non-Deep Learning MTL problems, such a kind of approach is useful to reduce the complexity of the optimization learning problem (e.g., see \cite{LIU2009_BCDmultitask}).

\subsection{Contribution}
In this paper we propose a novel approach for training a generic hard-parameter sharing MTNN. Though inspired both by the approaches based on task-specific gradients and by the BCD method, it is distinguished by the following characteristics.
\begin{itemize} 
    \item We always aim at reducing the aggregate loss function, but rather than alternating among stochastic gradient steps for a single task as in \cite{vicentejota2023,PascalMichiardi2021_MTLalt,MTL-bargain} we alternate stochastic estimators of the gradient with respect to the shared NN parameters and stochastic estimators of the gradient with respect to the task-specific NN parameters. The 
    shared and task-specific parameters are then updated alternately; in this way, when the task-specific weights are updated, all specific-task losses are reduced simultaneously. This important property depends on the multi-head architecture that characterizes the type of MTNN we consider in this work.
    \blue{We stress that we do not need to solve optimization subproblems as in the multi-gradient method in \cite{vicenteannor2021, mercierdesideri2018}.}
    
    \item Our approach is theoretically well-founded. We consider the case of nonconvex, differentiable functions and analyze both the case where shared and task-specific parameters are alternately updated at each iteration, 
    and the case where we \blue{keep updating} the shared (task-specific) parameters of the NN for one or more epochs, and then alternate, \blue{i.e. we alternate through the epochs. We show that the convergence properties of the classical stochastic gradient method are maintained. We carried out the convergence analysis assuming Lipschitz continuity of the partial gradients with respect to both shared and task-specific parameters. This yields to the use of potentially larger 
    sequences of learning rates.}

    \item The alternate training we present is a new stochastic gradient
    training procedure for hard-parameter sharing MTNNs, that compared with the classical stochastic gradient approach both reduces memory requirements and computational costs, and \blue{allows to use different sequences of learning rates in
    the shared and task-specific updating phases. Further, numerical experiments show that our approach regularizes the training phase}.
    
    \item One of our objectives was to devise easy-to-apply procedures and to provide a ready-to-use version of our proposed training routine for MTNNs (see \Cref{app:implATESG}), implementable within the most used Deep Learning frameworks in literature (e.g., see \cite{keras2015, tensorflow2015-whitepaper}).
\end{itemize}

The content of this work is organized as follows. We start by introducing the MTNNs and analyzing the properties of gradients computed with respect to shared or task-specific weights (\Cref{sec:multitaskNN}). Then, we describe the new alternate training method, and discuss its convergence properties (\Cref{sec:alterante_training}). After that, a section of numerical experiments (\Cref{sec:num_exp}) illustrates a comparison between MTNNs trained classically and trained using the proposed alternate training. Finally, conclusions about advantages and properties of the proposed method are summarized (\Cref{sec:conclusion}).

\section{Multiple-Task Neural Networks}\label{sec:multitaskNN}

A hard-parameter sharing MTNN for a MTL problem made of $K\in\N$ tasks is an NN model with an architecture characterized by: one main block of layers, called \emph{trunk}, connected to the input layer(s); $K$ independent blocks of layers, called \emph{branches}, connected to the last layer of the trunk. The last layer of the $k$-th branch is the output layer of the MTNN for the $k$-th task, for each $k=1,\ldots,K$.

The main idea behind this type of architecture (see \Cref{fig:multitaskNN_example}) is that the trunk encodes the inputs, learning the new representation characterized by features important for all the $K$ tasks. Then, each branch reads this representation (i.e., the output of the last trunk's layer) and decodes it independently from other branches, learning its task. In other words, we can interpret the output of a hard-parameter sharing MTNN as a concatenation of $K$ independent decoding operations applied to an encoding operation applied to the same input signals.

In the following, we formalize the definition of hard-parameter sharing MTNN and the observation about the outputs of an MTNN. From now on, for simplicity, we take for granted that when we talk about MTNNs we are considering a hard-parameter sharing MTNN as in the next definition.

\begin{definition}[Hard Parameter Sharing Multi-Task Neural Network]\label{def:MTNN}
Let $\mathrm{N}$ be an NN with characterizing function $\widehat{\v{F}}:\R^n\times\R^p\rightarrow \R^{m_1}\times\cdots\times\R^{m_K}$, where the domain $\R^n\times \R^p$ represents the Cartesian product between the space of the NN inputs ($\ \R^n$) and the space of the NN trainable parameters ($\ \R^p$). 
Then, $\mathrm{N}$ is a \emph{hard parameter sharing multi-task NN} (MTNN) with respect to $K$ tasks in $\R^{m_1},\ldots ,\R^{m_K}$, respectively, if $\mathrm{N}$'s architecture is characterized by $K+1$ smaller NNs, $\mathrm{N}_0,\ldots ,\mathrm{N_K}$, such that:
\begin{enumerate}
    \item the characterizing function of $\mathrm{N}_0$ is a function $\widehat{\v{F}}_0:\R^n\times\R^{p_0}\rightarrow\R^{m_0}$;
    \item the characterizing function of $\mathrm{N}_k$ is a function $\widehat{\v{F}}_k:\R^{m_0}\times\R^{p_k}\rightarrow\R^{m_k}$, for each $k=1,\ldots ,K$;
    \item $\mathrm{N}$ is obtained by connecting the output layer of $\mathrm{N}_0$ to the first layers of   $\mathrm{N}_1,...\,,\mathrm{N}_K$.
\end{enumerate}
In particular, we define $\mathrm{N}_0$ as the architecture's block shared by the $K$ tasks, while $\mathrm{N}_k$ is defined as the architecture's block specific of task $k$, for each $k=1,\ldots ,K$. 

Let $\v{w}_k\in \R^{p_k}$ be the vector of trainable parameters (i.e., weights and biases) of $\mathrm{N}_k$, for each $k=0,\ldots ,K$: the parameters in $\v{w}_0$ are defined as \emph{shared parameters} of $\mathrm{N}$, while the parameters in $\v{w}_k$ of $\mathrm{N}_k$ are defined as \emph{task-specific parameters} of $\mathrm{N}$ with respect to task $k$, for each $k=1,\ldots ,K$.
Then, $\sum_{k=0}^K p_k=p$ and $\widehat{\v{F}}$ is such that
\begin{equation*}
    \widehat{\v{F}}(\v{x}; \v{w}) = 
    \begin{bmatrix}
        \widehat{\v{F}}_1(\widehat{\v{F}}_0(\v{x};\v{w}_0);\v{w}_1)\\
        \vdots\\
        \widehat{\v{F}}_K(\widehat{\v{F}}_0(\v{x};\v{w}_0);\v{w}_K)
    \end{bmatrix}
\end{equation*}
for each $\v{x}\in\R^n$, where $\v{w}=(\v{w}_0^T,\ldots ,\v{w}_K^T)^T\in\R^p$.
\end{definition}

\begin{figure}[htb]
    \centering
    \includegraphics[width=0.35\textwidth]{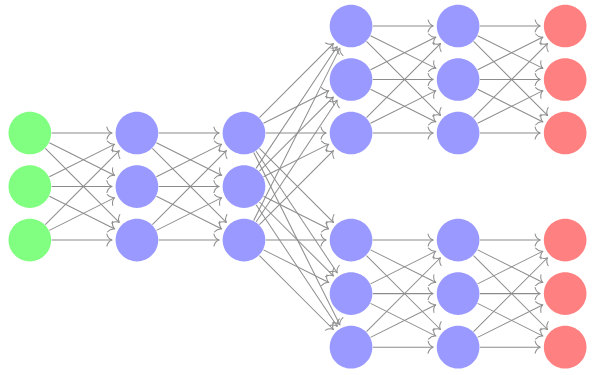}
    \caption{Example of MTNN with $2$ tasks. On the left half of the figure, there is $\mathrm{N}_0$ (with input layer in green); on the right half of the figure, there are $\mathrm{N}_1$ and $\mathrm{N}_2$ (with output layers in red).}
\label{fig:multitaskNN_example}
\end{figure}

\subsection{Loss Differentiation and Multiple Tasks}\label{sec:loss_diff}

In MTL, the loss function of a model typically is a weighted sum of different losses, evaluated with respect to each task. In Notation \ref{not:Bk_lossk} below, we introduce the symbols and the formalization we use to describe the aggregated loss and the task-specific losses of an MTNN, as well as the batches of corresponding input-output pairs. In this work, we always assume that the task-specific loss functions of the MTNN, and hence the aggregated loss also, are differentiable functions with respect to all the NN parameters.

\begin{notation}[Aggregated and task-specific batches and losses]\label{not:Bk_lossk}
Let $\mathrm{N}$ be an MTNN as in \Cref{def:MTNN} and let $\mathcal{B}\subset \R^n\times \R^m$ be a batch of input-output pairs for ${\rm N}$, where $m=\sum_{k=1}^K m_k$; a batch is always assumed finite and non-empty. Then, we introduce the following notations:
\begin{enumerate}
    \item we denote by $\mathcal{B}^k\subset \R^n\times\R^{m_k}$ the batch of input-output pairs related to the $k^{\rm th}$ task of ${\rm N}$ obtained from the batch $\mathcal{B}$; i.e.:
    \begin{equation*}
        \mathcal{B}^k := \left\lbrace (\v{x}, \v{y}_k ) \in\R^n\times\R^{m_k} \left . \right| (\v{x},\v{y})\in\mathcal{B} \right\rbrace,\quad k=1 ,\ldots ,K,
    \end{equation*}
    where  $\,\v{y}_k=(y_1^{(k)},\ldots ,y_{m_k}^{(k)})^T$ and $\,\v{y}=(\v{y}_{1}^T,\ldots ,\v{y}_{K}^T)^T\in\R^m$;
    \item we denote by $\ell_k:\mathrm{P}^*(\R^n \times\R^{m_k})\times \R^{p_0+p_k}\rightarrow\R$ a loss function defined for the task $k$ of ${\rm N}$, for each $k=1 ,\ldots ,K$, where $\mathrm{P}^*(A)$ denotes the set of finite and non-empty subsets of $A$, for each set $A$. For example, assuming $\ell_k$ as the \emph{Mean Square Error} (MSE), we have that
    \begin{equation*}
        \ell_k(\mathcal{B}^k;\v{w}_0,\v{w}_k) = 
        \frac{1}{|\mathcal{B}^k|} \sum_{(\v{x},\v{y}_k)\in\mathcal{B}^k} \left( \widehat{\v{F}}_k(\widehat{\v{F}}_0(\v{x};\v{w}_0);\v{w}_k) - \v{y}_k\right)^2\,,
    \end{equation*}
    for each batch $\mathcal{B}^k\subset\R^n \times\R^{m_k}$;
\item  
we denote by $\ell:\mathrm{P}^*(\R^n \times\R^m)\times\R^p\rightarrow\R$ the aggregated loss function of $\mathrm{N}$, such that $\ell$ is a linear combination with positive coefficients of the task-specific losses $\ell_1,\ldots ,\ell_K$:
\begin{equation}\label{eq:total_loss}
    \ell(\mathcal{B};\v{w}) := \sum_{k=1}^K \lambda_k \ell_k(\mathcal{B}^k;\v{w}_0,\v{w}_k )\,,
    \quad {\rm with }~~\lambda_1,\ldots ,\lambda_k\in\R^+.
\end{equation}
\end{enumerate}    
\end{notation}

Our alternate training method takes inspiration both from the BCD method and from those methods based on exploiting the task-specific gradients (see \Cref{sec:intro}). Indeed, it is easily seen that the gradient of $\ell$ with respect to the task-specific parameters of task $k$ is equal to the gradient of $\lambda_k \ell_k$, i.e.:
\begin{equation*}
    \nabla_{\v{w}_k}\ell(\mathcal{B};\v{w}) = \lambda_k \nabla_{\v{w}_k}\ell_k(\mathcal{B}^k;\v{w}_0, \v{w}_k)\,,
\end{equation*}
for each $k=1,\ldots ,K$, and for any batch $\mathcal{B}\in\mathrm{P}^*(\R^n\times \R^m)$. 
Then, once observed this characteristic, it is almost immediate to notice also that the gradient of $\ell(\mathcal{B};\v{w})$ with respect to all the task-specific parameters is just a concatenation of the $K$ gradients of the task-specific losses with respect to their own task-specific parameters (and multiplied by the coefficients), namely:
\begin{equation*}
    \nabla_{\v{w}_{\rm ts}}\ell(\mathcal{B};\v{w}) =
    \begin{bmatrix}
    \lambda_1 \nabla_{\v{w}_1}\ell_1(\mathcal{B}^1;\v{w}_0,\v{w}_1)\\
    \vdots\\
    \lambda_K \nabla_{\v{w}_K}\ell_K(\mathcal{B}^K;\v{w}_0,\v{w}_K)
    \end{bmatrix}
    \,,
\end{equation*}
for each $\mathcal{B}\in\mathrm{P}^*(\R^n\times \R^m)$,
where $\v{w}_{\rm ts}\in\R^{p_{\rm ts}}$ is the vector denoting the concatenation of all the task-specific parameters; i.e.:
\begin{equation*}
    \v{w}_{\rm ts} :=
    (\v{w}_1^T,\ldots ,\v{w}_K^T)^T
    \in\R^{p_{\rm ts}}
    \,, \quad {\rm with } ~~ p_{\rm ts}=\sum_{k=1}^K p_k.
\end{equation*}

As a consequence of these results, we 
\blue{ state in \Cref{prop:grads_and_descdirs} the descent properties  of  the two anti-gradients of 
$\ell$ with respect to the shared parameters and the task-specific parameters, respectively, for any batch $\mathcal{B}$}.
The proof of the proposition is omitted, since it is trivial.

\begin{proposition}[Gradients and Descent Directions]\label{prop:grads_and_descdirs}

Let $\mathrm{N}$ be an MTNN as in \Cref{def:MTNN} and let $\ell,\ell_1,\ldots ,\ell_K$ be the losses in \eqref{eq:total_loss}.
Let $\v{w}\in\R^p$ be the vector of trainable parameters of $\mathrm{N}$; specifically, $\v{w}$ is the concatenation of the shared parameters $\v{w}_0\in\R^{p_0}$ and the task-specific parameters $\v{w}_{\rm ts}\in\R^{p_{\rm ts}}$: 
\begin{equation*}
    \v{w} =  
   \begin{bmatrix}
   \v{w}_0\\
    \v{w}_{\rm ts}
   \end{bmatrix}
    \,.
\end{equation*}

Then, for each fixed batch $\mathcal{B}$ the vectors 
\begin{equation}\label{eq:descent_dirs_grads}
    \begin{bmatrix}
    -\nabla_{\v{w}_0}\ell(\mathcal{B};\v{w})\\
    \v{0}
    \end{bmatrix}
    \,, \
    \begin{bmatrix}
    \v{0}\\
    -\nabla_{\v{w}_{\rm ts}}\ell(\mathcal{B};\v{w})
    \end{bmatrix}
    \in
    \R^p
\end{equation}
are descent directions for the loss $\ell(\mathcal{B};\v{w})$ at $\v{w}$, 
if $\nabla_{\v{w}_{0}}\ell(\mathcal{B};\v{w}) \neq \v{0}\in\R^{p_0}$ and $\nabla_{\v{w}_{\rm ts}}\ell(\mathcal{B};\v{w})\neq\v{0}\in\R^{p_{\rm ts}}$, respectively. Moreover, $(\v{0}^T,-\nabla^T_{\v{w}_{k}}\ell_k(\mathcal{B}^k;\v{w}_0,\v{w}_k))^T$ (subvector of the \blue{second} vector in \eqref{eq:descent_dirs_grads}) is also a descent direction for $\ell_k(\mathcal{B}^k;\v{w}_0,\v{w}_k)$, for each $k=1,\ldots ,K$.

\end{proposition}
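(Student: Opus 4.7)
The plan is to verify in each case the standard descent-direction criterion: a vector $\v{d}\in\R^p$ is a descent direction for a differentiable $f$ at $\v{w}$ whenever $\nabla f(\v{w})^T \v{d} < 0$. Since $\ell$ is assumed differentiable in $\v{w}$, its gradient at $\v{w}$ decomposes blockwise as
\begin{equation*}
    \nabla_{\v{w}} \ell(\mathcal{B}; \v{w}) =
    \begin{bmatrix}
        \nabla_{\v{w}_0}\ell(\mathcal{B};\v{w}) \\
        \nabla_{\v{w}_{\rm ts}}\ell(\mathcal{B};\v{w})
    \end{bmatrix},
\end{equation*}
so the two inner products I need to evaluate reduce to $-\|\nabla_{\v{w}_0}\ell(\mathcal{B};\v{w})\|^2$ and $-\|\nabla_{\v{w}_{\rm ts}}\ell(\mathcal{B};\v{w})\|^2$, respectively. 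Under the nonzero-gradient hypotheses, both are strictly negative, establishing the first part of the statement.

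For the second part, I would use the key structural observation, already exploited in the discussion preceding the proposition, that $\ell_k$ depends on $\v{w}$ only through $\v{w}_0$ and $\v{w}_k$. Extending $\ell_k$ trivially to a function of the full parameter vector, its gradient has zero blocks on every $\v{w}_j$ with $j\neq k$. Therefore, if I pair this gradient with the subvector $(\v{0}^T,\ldots ,\v{0}^T,-\nabla_{\v{w}_k}^T\ell_k,\v{0}^T,\ldots ,\v{0}^T)^T$, only the $k$-th task-specific block survives, giving $-\|\nabla_{\v{w}_k}\ell_k(\mathcal{B}^k;\v{w}_0,\v{w}_k)\|^2 < 0$ whenever $\nabla_{\v{w}_k}\ell_k$ is nonzero. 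This yields the descent property for $\ell_k$.

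There is no real obstacle here: the whole argument is a one-line inner-product computation per direction, which is precisely why the authors call the proposition trivial. The only point that deserves a brief written justification, rather than being left implicit, is the blockwise zero structure of $\nabla_{\v{w}}\ell_k$ arising from the hard-parameter-sharing architecture of \Cref{def:MTNN}, since this is what makes the third descent claim nontrivial to state (even though it is trivial to prove).
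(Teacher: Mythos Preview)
Your argument is correct and is precisely the trivial inner-product verification the paper has in mind; indeed, the paper omits the proof entirely, stating that it is trivial. There is nothing to add or compare.
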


We conclude this section remarking the different implications of the two descent directions \eqref{eq:descent_dirs_grads}:
\begin{itemize}
    \item $\v{w}_0$-based direction: it is a descent direction for $\ell(\mathcal{B};\v{w})$ that updates the shared parameters only. 
    It allows \blue{reducing} the loss with respect to the weights that affect all the tasks, then there are no guarantees of reducing all the task-specific losses too;
    \item $\v{w}_{\rm ts}$-based direction: is a descent direction for $\ell(\mathcal{B};\v{w})$ but also for all the task-specific losses $\ell_1,\ldots ,\ell_K$, and it updates the task-specific parameters only. 
    It allows to reduce both the main loss and the task-specific losses with respect to the weights $\v{w}_1,\ldots ,\v{w}_{K}$ that affect only the losses $\ell_1,\ldots ,\ell_K$, respectively (i.e., only the corresponding tasks).\\
\end{itemize}
\blue{   
Starting from these properties, in the next section we formulate new
alternate training procedures 
and prove their convergence properties assuming Lipschitz continuity of the gradients.
Specifically, 
there exist   shared  Lipschitz constants $L_0$, $L_0^{\rm ts}$, and
 task-specific Lipschitz constants $L_{\rm ts}$, $L_{\rm ts}^0$, such that for any  
$    \v{w} =  ( \v{w}_0^T,\,\v{w}_{\rm ts}^T ) ^T
    \,,
$
$    \v{d} =  ( \v{d}_0^T,\,\v{d}_{\rm ts}^T ) ^T
    \in \R^p
$:

\begin{equation} \label{lipschitz} 
\begin{split}
a)~~~& \Vert \nabla_{\v{w}_0} \ell(\,\cdot\,;\,\v{w})-\nabla_{\v{w}_0} \ell(\,\cdot\,;\,\v{w+d})\Vert 
~~\le~ \, L_0 \,\Vert \v{d_0}\Vert, ~~~ {\rm when} ~~ 
    \v{d}_{\rm ts}= 
    \v{0}, \\
b)~~~&\Vert \nabla_{\v{w}_0} \ell(\,\cdot\,;\,\v{w})-\nabla_{\v{w}_0} \ell(\,\cdot\,;\,\v{w+d})\Vert 
~~\le~ \, L_0^{\rm ts}\, \Vert \v{d}_{\rm ts}\Vert, ~~ {\rm when} ~~ 
    \v{d}_0= 
    \v{0},\\
c)~~~&\Vert \nabla_{\v{w}_{\rm ts}} \ell(\,\cdot\,;\,\v{w})-\nabla{\v{w}_{\rm ts}} \ell(\,\cdot\,;\,\v{w+d})\Vert  ~\le~ \,
L_{\rm ts} \,\Vert \v{d_{\rm ts}}\Vert, ~~ {\rm when} ~~ 
    \v{d}_0 =  
   \v{0}    \,,\\
d)~~~&\Vert \nabla_{\v{w}_{\rm ts}} \ell(\,\cdot\,;\,\v{w})-\nabla{\v{w}_{\rm ts}} \ell(\,\cdot\,;\,\v{w+d})\Vert  ~\le~ \,
L_{\rm ts}^0 \,\Vert \v{d_{0}}\Vert, ~~~ {\rm when} ~~ 
    \v{d}_{\rm ts} =  
   \v{0}.    
    \end{split}
\end{equation} 
It can be easily seen that~ $\max\,\{L_0,L_0^{\rm ts},L_{\rm ts},L_{\rm ts}^0\}\le L$ 
(see e.g. \cite{Wright2015})), where $L$ denotes the  standard Lipschitz constant of the gradient:}
$$
\Vert \nabla \ell(\,\cdot\,;\,\v{w})-\nabla \ell(\,\cdot\,;\,\v{w}+\v{d})\Vert \le \,
L \Vert \v{d}\Vert.
$$
Where not explicitly specified, as above, gradients are taken with respect to all the parameters $\v{w}$.

\section{Alternate Training}\label{sec:alterante_training}

 \blue{The idea of an alternate training can be realized in many different ways. In this work, we define alternate training methods which are modifications of a classical Stochastic Gradient (SG) procedure. They can also be extended to other optimization procedures, but we deserve these generalizations for future work. }

\Cref{prop:grads_and_descdirs} defines two alternative descent directions for the loss function.
Devising a training procedure based on the alternate usage of these directions or their stochastic estimators may yield the following practical advantages.

\blue{
\begin{enumerate}
    \item\label{rem:adv_alt_train_memory} 
    \emph{Alternate training for reduced memory usage.} The advantage of the alternate training concerning memory is almost evident in situations where 
    $p_0 \not \approx p$ and $p_{\rm ts} \not \approx 0$ (or vice-versa). 
    Indeed, at each step, an alternate procedure requires the storage of a gradient $\nabla_{\v{w}_0}\ell$ with dimension $p_0$ or a gradient $\nabla_{\v{w}_{\rm ts}}\ell$ with dimension $p_{\rm ts}$; in both cases, the gradient has dimension smaller than the ``global'' gradient computed with respect to all the NN's weights 
    (i.e., with dimension $p=p_0+p_{\rm ts}$). 
 
 \item\label{rem:adv_alt_train_cost} 
    \emph{Alternate training for reduced computational cost.}
    We observe that the computation of $\nabla_{\v{w}_{\rm ts}}\ell$ is cheaper than the computation of both $\nabla_{\v{w}_0}\ell$ and $\nabla_{\v{w}}\ell$, since only the task-specific layers of the NN are involved during the back-propagation \cite{Rumelhart1986_BACKPROP_Nature}; then, 
    at equal number of epochs, training the model
    with a procedure that alternates  between  $\nabla_{\v{w}_0}\ell$ and
    $\nabla_{\v{w}_{\rm ts}}\ell$ to update the current iterate is less costly than the classical stochastic gradient which always uses $\nabla_{\v{w}}\ell$.
    
    \item\label{rem:adv_alt_train} 
    \emph{Alternate training for regularization.}  Relying on the $\v{w}_{\rm ts}$-based direction partially reduces the typical difficulty of MTL models about selecting a direction that is not a descent direction for all the tasks, and therefore reduces the possibility of having an increase of some losses despite the overall objective function decreases (see \Cref{sec:intro}). This interesting regularization property of the training phase is illustrated in the numerical experiments of \Cref{sec:num_exp}.
\end{enumerate}
}
\noindent
In the next subsections, we define and analyze a couple of alternate training strategies. 

\subsection{Simple Alternate Training}\label{sec:SAT}
The Simple Alternate Training (SAT) method is a two steps iterative process that alternately updates $\v{w}_0$ and $\v{w}_{\rm ts}$ in an MTNN.
The variable $\v{w}_0$ is updated at each iteration using a stochastic estimator of 
$\nabla_{\v{w}_0}\ell$, while $\v{w}_{\rm ts}$ is updated using a stochastic estimator of $\nabla_{\v{w}_{\rm ts}}\ell$. In what follows we denote the training set as $\mathcal{T}$. 
We name SAT-SG this procedure in order to emphasize the relationship with SG, and we describe one iteration in \Cref{alg:SAT}.

\begin{algorithm}
\caption{SAT-SG - Simple Alternate Training SG for MTNNs}
\label{alg:SAT}
~~
\begin{description}
\item[Data:] $(\v{w}_0, \v{w}_{\rm ts})=\v{w}^{(i)}$ (current iterate for the trainable parameters), $\mathcal{T}$ (training set), $B$ (mini-batch size), 
\blue{ $\eta_i^{0}$ and $\eta_i^{\rm ts}$} 
(learning rates), $\ell=\ell(\mathcal{B};\v{w})$ (loss function).

\item[Iteration $i$:] \quad

    \begin{algorithmic}[1]
        \STATE Sample randomly a batch $\mathcal{B}_1$ from $\mathcal{T}$ s.t. $|\mathcal{B}_1|=B$

        \STATE $\v{w}_0 \gets \v{w}_0 - \blue{\eta_i^0 }\,
             \nabla_{\v{w}_0}\ell(\mathcal{B}_1;\v{w}^{(i)})$
            
        \STATE $\v{z}^{(i)} \gets  (\v{w}_0, \v{w}_{\rm ts})$
        
        \STATE Sample randomly a batch $\mathcal{B}_2$ from $\mathcal{T}$ s.t. $|\mathcal{B}_2|=B$
           
        \STATE $\v{w}_{\rm ts} \gets \v{w}_{\rm ts} - \blue{\eta_i^{\rm ts}}\,
                \nabla_{\v{w}_{\rm ts}}\ell(\mathcal{B}_2; \v{z}^{(i)})$
        
        \STATE $\v{w}^{(i+1)} \gets  (\v{w}_0, \v{w}_{\rm ts})$
        
        \RETURN $\v{w}^{(i+1)}$ \quad (updated iterate for MTNN's weights)
    \end{algorithmic}
\end{description}
\end{algorithm}

\blue{Convergence properties of SAT-SG are proven in \Cref{{teo:simp_alt_grads_conv}};  
both cases of constant and diminishing learning rates are considered. 
\begin{notation}
From now on, to shorten the notation, we will omit to explicitly indicate the dependence of the loss function from the batch of data when this coincides with the whole training set $\mathcal{T}$, namely we will write $\ell(\v{w})$ for $ \ell(\mathcal{T};\v{w})$, and $\nabla \ell(\v{w})$ for $\nabla \ell(\mathcal{T};\v{w})$.
\end{notation}
We first provide in \Cref{teo:simp_alt_grads} 
an upper bound for the expected conditioned value of the loss function $\ell(\v{w})$ for sufficiently small learning rates $\eta_i^0$ and $\eta_i^{\rm ts}$.} 

\begin{lemma}[SAT-SG]\label{teo:simp_alt_grads}
Let $\{\v{w}^{(i)}\}_{i\ge 0}, \{\v{z}^{(i)}\}_{i\ge 0} \subset \R^p$ be two sequences
generated by the SAT-SG method and \blue{$\{\eta_i^0\}_{i\ge 0}$,
$\{\eta_i^{\rm ts}\}_{i\ge 0}$,}
be the sequences of the learning rates used. Let $\mathcal A_i$ denote the $\sigma$-algebra induced by $ \v{w}^{(0)}, \v{z}^{(0)}$, $\v{w}^{(1)}, \v{z}^{(1)}$, ..., $ \v{w}^{(i)}$,
and $\mathcal A_{i+\frac{1}{2}}$ the $\sigma$-algebra induced by $ \v{w}^{(0)}, \v{z}^{(0)}$, $ \v{w}^{(1)}, \v{z}^{(1)}$, ..., $ \v{w}^{(i)}, \v{z}^{(i)}$. 

\smallskip\noindent
Assume that the batches $\mathcal{B}_1$ and $\mathcal{B}_2$ are sampled randomly and uniformly, that there exist two positive constants $M_1$ and $M_2$ such that
\begin{eqnarray} \label{Egradiente0}
\E[\Vert\nabla_{\v{w}_0} \ell(\mathcal{B};\v{w}^{(i)})\Vert^2 | \mathcal A_i]\le M_2 \, \Vert\nabla_{\v{w}_0}\ell(\v{w}^{(i)})\Vert^2+M_1\\ \nonumber \\  
\label{EgradienteTS}
\E[\Vert\nabla_{\v{w}_{\rm ts}}\ell(\mathcal{B};\v{z}^{(i)})\Vert^2 | \mathcal A_{i+\frac{1}{2}}]\le M_2 \, \Vert\nabla_{\v{w}_{\rm ts}}\ell(\v{z}^{(i)})\Vert^2+M_1
\end{eqnarray}
for any batch of data $\mathcal{B}$, and \blue{that 
$\nabla_{\v{w}_0}\ell(\,\cdot\,;\v{w})$ and  $\nabla_{\v{w}_{\rm ts}}\ell(\,\cdot\,;\v{w})$  satisfy the Lipschitz continuity
conditions (\ref{lipschitz}).

\smallskip\noindent
Then the following properties hold:
\begin{equation} \label{expect0}  
    \E[\ell(\v{z}^{(i)}) | \mathcal A_i]    ~\le ~ \ell(\v{w}^{(i)})
      -  \eta_i^0 \,G_i^0 \,\Vert\nabla_{\v{w}_0} \ell(\v{w^{(i)}})\Vert^2
        +\frac{L_0}{2}\,(\eta_i^0)^2 \, M_1,
 \end{equation}
with  $ ~G_i^0= 1-\frac{L_0}{2} \,\eta_i^0 \,M_2 > 0\,$ 
for any $\,i\ge 0\,$ and
$\,0<\eta_i^0<  \,\frac{1}{L_0} \,\frac{2}{M_2},$ 
\begin{equation}\label{expect2}
\E[\ell(\v{w}^{(i+1)}) | \mathcal A_{i+\frac{1}{2}}] ~\le~
 \ell(\v{z}^{(i)})
      -  \eta_i^{\rm ts} \,G_i^{\rm ts} \,\Vert\nabla_{\v{w}_{\rm ts}} \ell(\v{z^{(i)}})\Vert^2
        +\frac{L_{\rm ts}}{2}\,(\eta_i^{\rm ts} )^2 \, M_1,
\end{equation}
with  $ ~G_i^{\rm ts}= 1-\frac{L_{\rm ts}}{2} \,\eta_i^{\rm ts} \,M_2 > 0\,$
for any $\,i\ge 0\,$ and
$0<\eta_i^{\rm ts}<  \,\frac{1}{L_{\rm ts}}\,\frac{2}{M_2}$.
}

\medskip\noindent
Moreover, for sufficiently small values of the learning rates, e.g. 
\begin{equation}\label{eta_bounds}
0<\eta_i^0<  \,\min\,\left \{\frac{1-C}{L_0} \frac{2}{M_2}\,,\,\frac{C}{L_{\rm ts}^0}\right \}
~~{\rm and} ~~ 
0<\eta_i^{\rm ts}<   \,\min\,\left \{\frac{1-C}{L_{\rm ts}} \frac{2}{M_2}\,,\,\frac{C}{L_{\rm ts}^0}\right \}
\end{equation}
for any 
 $C\in (0,1)$ and $i\ge 0$, 
the following properties hold: 
\begin{equation}\label{sufficient_decrease}
G_i^{0}>C, \quad \quad  \quad   G_i^{\rm ts}>C, \quad \quad
\end{equation}
\vskip -20pt
\begin{equation}\label{expected:descent_property_readable}
    \E[\ell(\v{w}^{(i+1)})|\mathcal A_i]  ~  \le  ~\ell(\v{w}^{(i)})   
 - ~ \eta_i^{\rm min} \,G_i \,\Vert\nabla \ell(\v{w^{(i)}})\Vert^2
       +         M_1 \frac{ L_0\,(\eta_i^0)^2  + L_{\rm ts} \, (\eta_i^{\rm ts})^2}{2}~  
\end{equation}
with 
$ ~G_i= C- L^0_{\rm ts}\,  \eta_i^{\rm max}   > 0$, 
$\eta_i^{\rm min}=\min \{\eta_i^0,\, \eta_i^{\rm ts}\}$ and 
$\,\eta_i^{\rm max}=\max \{\eta_i^0,\, \eta_i^{\rm ts}\}.$
\end{lemma}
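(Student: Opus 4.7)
The conditional inequalities (\ref{expect0}) and (\ref{expect2}) follow the standard stochastic-gradient analysis one block-step at a time. For (\ref{expect0}), the step from $\v{w}^{(i)}$ to $\v{z}^{(i)}$ modifies only the shared block (so $\v{d}_{\rm ts}=\v{0}$), and the restricted Lipschitz condition (\ref{lipschitz})(a) combined with Taylor's theorem yields the descent-lemma bound
\[
\ell(\v{z}^{(i)})\,\le\,\ell(\v{w}^{(i)})-\eta_i^0\,\nabla_{\v{w}_0}\ell(\v{w}^{(i)})^{T}\nabla_{\v{w}_0}\ell(\mathcal{B}_1;\v{w}^{(i)})+\frac{L_0(\eta_i^0)^2}{2}\,\|\nabla_{\v{w}_0}\ell(\mathcal{B}_1;\v{w}^{(i)})\|^{2}.
\]
Taking $\E[\,\cdot\,|\mathcal{A}_i]$ and using (i) the unbiasedness of the mini-batch estimator, a direct consequence of the uniform random sampling of $\mathcal{B}_1$, for the linear term, and (ii) the second-moment bound (\ref{Egradiente0}) for the quadratic term, one collects the coefficient of $\|\nabla_{\v{w}_0}\ell(\v{w}^{(i)})\|^{2}$ into $G_i^{0}=1-\tfrac{L_0 M_2\eta_i^0}{2}$, which is positive for $\eta_i^0<2/(L_0 M_2)$; this yields (\ref{expect0}). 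The derivation of (\ref{expect2}) is entirely symmetric, using (\ref{lipschitz})(c) and (\ref{EgradienteTS}) for the task-specific update.

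The positivity statements (\ref{sufficient_decrease}) follow directly from the first clauses of (\ref{eta_bounds}): indeed, $\eta_i^0<(1-C)\cdot 2/(L_0 M_2)$ rearranges to $L_0 M_2\eta_i^0/2<1-C$, hence $G_i^{0}>C$; the analogous argument gives $G_i^{\rm ts}>C$.

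For the combined bound (\ref{expected:descent_property_readable}), I apply the tower property $\E[\,\cdot\,|\mathcal{A}_i]=\E[\E[\,\cdot\,|\mathcal{A}_{i+\frac{1}{2}}]\,|\,\mathcal{A}_i]$ to (\ref{expect2}) and then substitute (\ref{expect0}) for $\E[\ell(\v{z}^{(i)})\,|\,\mathcal{A}_i]$, obtaining
\[
\E[\ell(\v{w}^{(i+1)})\,|\,\mathcal{A}_i]\,\le\,\ell(\v{w}^{(i)})-\eta_i^0 G_i^0\|\nabla_{\v{w}_0}\ell(\v{w}^{(i)})\|^2-\eta_i^{\rm ts} G_i^{\rm ts}\,\E[\|\nabla_{\v{w}_{\rm ts}}\ell(\v{z}^{(i)})\|^2\,|\,\mathcal{A}_i]+\tfrac{M_1}{2}\bigl[L_0(\eta_i^0)^2+L_{\rm ts}(\eta_i^{\rm ts})^2\bigr].
\]
The crucial step, which I expect to be the main obstacle, is to replace $\nabla_{\v{w}_{\rm ts}}\ell(\v{z}^{(i)})$ by $\nabla_{\v{w}_{\rm ts}}\ell(\v{w}^{(i)})$ while keeping the dependence on $L_{\rm ts}^0$ linear --- matching the form $G_i=C-L_{\rm ts}^0\eta_i^{\rm max}$. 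My plan is to expand $\|\nabla_{\v{w}_{\rm ts}}\ell(\v{z}^{(i)})\|^2$ through the identity $\|a+b\|^2=\|a\|^2+2a^T b+\|b\|^2$ with $a=\nabla_{\v{w}_{\rm ts}}\ell(\v{w}^{(i)})$ and $b$ the gradient drift, bound $\|b\|$ by $L_{\rm ts}^0\eta_i^0\|\nabla_{\v{w}_0}\ell(\mathcal{B}_1;\v{w}^{(i)})\|$ through (\ref{lipschitz})(d), and handle the cross term with Young's inequality using a weight calibrated to $L_{\rm ts}^0\eta_i^0$. Taking conditional expectation and applying (\ref{Egradiente0}) to the resulting mini-batch squared norm, the two coefficients of $\|\nabla_{\v{w}_0}\ell(\v{w}^{(i)})\|^2$ and $\|\nabla_{\v{w}_{\rm ts}}\ell(\v{w}^{(i)})\|^2$ can both be lower bounded by $\eta_i^{\rm min}(C-L_{\rm ts}^0\eta_i^{\rm max})=\eta_i^{\rm min}G_i$, provided the second clauses of (\ref{eta_bounds}) guarantee $L_{\rm ts}^0\eta_i^{\rm max}<C$. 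Summing via $\|\nabla_{\v{w}_0}\ell(\v{w}^{(i)})\|^2+\|\nabla_{\v{w}_{\rm ts}}\ell(\v{w}^{(i)})\|^2=\|\nabla\ell(\v{w}^{(i)})\|^2$ then delivers (\ref{expected:descent_property_readable}). The delicate technicality is precisely this Young's-inequality calibration, so that no noise contribution beyond $\tfrac{M_1}{2}[L_0(\eta_i^0)^2+L_{\rm ts}(\eta_i^{\rm ts})^2]$ enters the final bound.
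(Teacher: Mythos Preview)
Your plan matches the paper's proof closely: the derivations of \eqref{expect0}, \eqref{expect2}, and \eqref{sufficient_decrease} are identical to the paper's, and for \eqref{expected:descent_property_readable} you use the same skeleton---tower property, the expansion $\|a+b\|^2=\|a\|^2+2a^{T}b+\|b\|^2$ with $a=\nabla_{\v{w}_{\rm ts}}\ell(\v{w}^{(i)})$ and $b$ the drift, and the Lipschitz bound (\ref{lipschitz}.d) to control $\|b\|$. The one place where you diverge is the cross-term treatment. The paper takes the simpler route: it \emph{drops} the nonnegative $\|b\|^2$ term (valid, since only a lower bound on $\|\nabla_{\v{w}_{\rm ts}}\ell(\v{z}^{(i)})\|^2$ is needed), then after the tower property passes directly from $\|\nabla_{\v{w}_0}\ell(\mathcal{B}_1;\v{w}^{(i)})\|$ to $\|\nabla_{\v{w}_0}\ell(\v{w}^{(i)})\|$, and finally applies the \emph{uncalibrated} inequality $2ab\le a^2+b^2$ together with $\eta_i^0\eta_i^{\rm ts}=\eta_i^{\rm min}\eta_i^{\rm max}$ and $G_i^{\rm ts}<1$ to obtain the coefficient $\eta_i^{\rm min}(C-L_{\rm ts}^0\eta_i^{\rm max})$. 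In particular, the paper never invokes \eqref{Egradiente0} on the cross term, which is exactly how it avoids the extra $M_1$ contribution you flagged as the delicate technicality; your calibrated Young's-inequality idea would also go through but is more machinery than the paper actually uses.
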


\begin{proof}
First, we consider the updating 
of the shared parameters $\v{w}_0$ (see steps 2 and 3 of \Cref{alg:SAT}), 
and use Taylor's theorem  and inequality (\ref{lipschitz}.a) to obtain
\begin{equation*}
    \ell(\v{z}^{(i)}) \le \ell(\v{w}^{(i)})
      ~+ ~ \eta_i^0 \,\nabla \ell(\v{w^{(i)}})^T 
      \begin{bmatrix}  -\nabla_{\v{w}_0}\ell(\mathcal{B}_1;\v{w}^{(i)}) \\  \v{0}
     \end{bmatrix}
    +\frac{L_0}{2}\,(\eta_i^0)^2 \,\Vert \nabla_{\v{w}_0}\ell(\mathcal{B}_1;\v{w}^{(i)}) \Vert^2.\\
\end{equation*}
Then, taking the conditioned expected value on both sides, 
exploiting assumption \eqref{Egradiente0} and the fact that the subsampled gradient $\nabla_{\v{w}_0}\ell(\mathcal{B}_1;\v{w}^{(i)})$ is an unbiased estimator, we have:
\begin{equation*}    \begin{split} 
    \E[\ell(\v{z}^{(i)}) | \mathcal A_i]  & ~\le ~ \ell(\v{w}^{(i)})
      -  \eta_i^0 \,\nabla_{\v{w}_0} \ell(\v{w^{(i)}})^T 
      \E[\nabla_{\v{w}_0}\ell(\mathcal{B}_1;\v{w}^{(i)})|\mathcal A_i] ~+ \\
      & \quad ~~~\frac{L_0}{2}\,(\eta_i^0)^2 \,
       \E[ \Vert \nabla_{\v{w}_0}\ell(\mathcal{B}_1;\v{w}^{(i)}) \Vert^2\, | \mathcal A_i]\\
         &   ~\le ~ \ell(\v{w}^{(i)})
      -  \eta_i^0 \,\Vert\nabla_{\v{w}_0} \ell(\v{w^{(i)}})\Vert^2
        +\frac{L_0}{2}\,(\eta_i^0)^2 \, ( M_2 \,\Vert\nabla_{\v{w}_0}\ell(\v{w}^{(i)})\Vert^2+M_1),
\end{split}
\end{equation*}
\blue{which, by setting $G_i^0=1-\frac{L_0}{2} \,\eta_i^0 \,M_2$, reduces to
\eqref{expect0}, 
with $G_i^0>0$ for sufficiently small values of $\eta_i^0$, namely 
$\eta_i^0<  \,\frac{1}{L_0}\,\frac{2}{M_2}$. }
 
Similarly, after updating the task-specific parameters $\v{w}_{\rm ts}$ we have  
 \begin{equation*}
    \ell(\v{w}^{(i+1)}) \le \ell(\v{z}^{(i)})
      ~+ ~ \eta_i^{\rm ts} \,\nabla \ell(\v{z^{(i)}})^T 
      \begin{bmatrix}  \v{0} \\ -\nabla_{\v{w}_{\rm ts}}\ell(\mathcal{B}_2;\v{z}^{(i)})   
     \end{bmatrix}
    +\frac{L_{\rm ts}}{2}\,(\eta_i^{\rm ts})^2 \,\Vert \nabla_{\v{w}_{\rm ts}}\ell(\mathcal{B}_2;\v{z}^{(i)}) \Vert^2,\\
\end{equation*}
so that, proceeding as before in conditioned expected value and using
assumption \eqref{EgradienteTS}, we get 
\blue{\eqref{expect2}
if~ $\eta_i^{\rm ts}<  \,\frac{1}{L_{\rm ts}}\,\frac{2}{M_2}$, so that
$G_i^{\rm ts}=1-\frac{L_{\rm ts}}{2} \,\eta_i^{\rm ts} \,M_2 > 0$.

\noindent
Further,    \eqref{sufficient_decrease} trivially follows from  \eqref{eta_bounds},
since $ G_i^0$
and 
$G_i^{\rm ts}$
are bounded below by a positive constant $C\in (0,1)$ for sufficiently small values of 
$\eta_i^0$ and $\eta_i^{\rm ts}$, respectively 
$\eta_i^0\le  \,\frac{1-C}{L_0}\,\frac{2}{M_2}$ and 
$\eta_i^{\rm ts}\le  \,\frac{1-C}{L_{\rm ts}}\,\frac{2}{M_2}$.
}

\smallskip\noindent
Now, using inequality (\ref{lipschitz}.d) and the definition of  $\v{z}^{(i)}$, we observe that 
\begin{equation*}\begin{split}
  \Vert\nabla_{\v{w}_{\rm ts}} \ell(\v{z^{(i)}})\Vert^2
   & 
  =  \Vert\nabla_{\v{w}_{\rm ts}} \ell(\v{w}^{(i)})  +
  \nabla_{\v{w}_{\rm ts}}\ell( \v{z}^{(i)})-
 \nabla_{\v{w}_{\rm ts}}\ell( \v{w}^{(i)})\,\Vert^2~\\
  & = \Vert\nabla_{\v{w}_{\rm ts}} \ell(\v{w^{(i)}})\Vert^2 
      \, + \Vert \, \nabla_{\v{w}_{\rm ts}}\ell(\v{z}^{(i)}) -
 \nabla_{\v{w}_{\rm ts}}\ell(\v{w^{(i)}})\,\Vert^2 +\\
 & ~~~
\,  2\, \nabla_{\v{w}_{\rm ts}}\ell(\v{w}^{(i)})^T  
    \left(\, \nabla_{\v{w}_{\rm ts}}\ell( \v{z}^{(i)})-
 \nabla_{\v{w}_{\rm ts}}\ell( \v{w}^{(i)})\,\right)
    \,\\
 &  
  \ge   \,\Vert\nabla_{\v{w}_{\rm ts}} \ell(\v{w^{(i)}})\Vert^2
\, - 2\, \Vert \nabla_{\v{w}_{\rm ts}}\ell(\v{w}^{(i)}) \Vert \, 
        \Vert \, \nabla_{\v{w}_{\rm ts}}\ell(\v{w}^{(i)})-
 \nabla_{\v{w}_{\rm ts}}\ell(\v{z^{(i)}})\,\Vert \,\\
  & 
\ge    \,\Vert\nabla_{\v{w}_{\rm ts}} \ell( \v{w}^{(i)})\Vert^2 -
 2L^0_{\rm ts}\,\eta_i^0 \, \Vert\nabla_{\v{w}_{\rm ts}} \ell( \v{w}^{(i)})\Vert \,
  \Vert\nabla_{\v{w}_{0}} \ell( \mathcal{B}_1;\v{w}^{(i)})\Vert;
  \end{split}
\end{equation*}
then using this last inequality in \eqref{expect2} we have
\begin{equation}\label{expect3}\begin{split}
\E[\ell(\v{w}^{(i+1)}) | \mathcal A_{i+\frac{1}{2}}]  ~\le~ &
 \ell(\v{z}^{(i)})
      -  \eta_i^{\rm ts} \,G_i^{\rm ts} \,\Vert\nabla_{\v{w}_{\rm ts}} \ell(\v{w^{(i)}})\Vert^2 +
      \\
      & 2L^0_{\rm ts}\,\eta_i^0 \eta_i^{\rm ts} G_i^{\rm ts} \, 
      \Vert\nabla_{\v{w}_{\rm ts}} \ell( \v{w}^{(i)})\Vert \,
      \Vert\nabla_{\v{w}_{0}} \ell(\mathcal{B}_1; \v{w}^{(i)})\Vert
      +
      \\
      & \frac{L_{\rm ts}}{2}\,(\eta_i^{\rm ts})^2 \, M_1.
\end{split} \end{equation}
Finally, recalling that
$$\E[\ell(\v{w}^{(i+1)}) | \mathcal A_{i}]=
\E[\,\E[\ell(\v{w}^{(i+1)}) | \mathcal A_{i+\frac{1}{2}}]\,|\, \mathcal A_i],$$
combining \eqref{expect0} and \eqref{expect3}, using \eqref{sufficient_decrease}
 and the upper bound $G_i^{\rm ts}<1$,
we have
{\small
\begin{equation*}\begin{split}
\E[\ell(\v{w}^{(i+1)}) | \mathcal A_i]& ~\le~
 \E [ \ell(\v{z}^{(i)}) | \mathcal A_i]
       -  \eta_i^{\rm ts} \,G_i^{\rm ts} \,\Vert\nabla_{\v{w}_{\rm ts}} \ell(\v{w^{(i)}})\Vert^2 +\\
      & \quad \quad 2L^0_{\rm ts}\,\eta_i^0 \eta_i^{\rm ts} G_i^{\rm ts} \, 
      \Vert\nabla_{\v{w}_{\rm ts}} \ell( \v{w}^{(i)})\Vert \,
  \Vert\nabla_{\v{w}_{0}} \ell( \v{w}^{(i)})\Vert
        +\frac{L_{\rm ts}}{2}\,(\eta_i^{\rm ts})^2 \, M_1\\
& ~\le~
 \ell(\v{w}^{(i)})
      -  \eta_i^0 \,G_i^0 \,\Vert\nabla_{\v{w}_0} \ell(\v{w^{(i)}})\Vert^2
        +\frac{L_0}{2}\,(\eta_i^0)^2 \, M_1  
 -  \eta_i^{\rm ts} \,G_i^{\rm ts} \,\Vert\nabla_{\v{w}_{\rm ts}} \ell(\v{w^{(i)}})\Vert^2 ~ +\\
       & \quad \quad 2L^0_{\rm ts}\,\eta_i^0 \eta_i^{\rm ts} G_i^{\rm ts} \, 
      \Vert\nabla_{\v{w}_{\rm ts}} \ell( \v{w}^{(i)})\Vert \,
  \Vert\nabla_{\v{w}_{0}} \ell( \v{w}^{(i)})\Vert
        +\frac{L_{\rm ts}}{2}\,(\eta_i^{\rm ts})^2 \, M_1\\
& ~ \le ~
 \ell(\v{w}^{(i)})
      - C\,( \eta_i^0  \,\Vert\nabla_{\v{w}_0} \ell(\v{w^{(i)}})\Vert^2
+ \eta_i^{\rm ts}  \,\Vert\nabla_{\v{w}_{\rm ts}} \ell(\v{w^{(i)}})\Vert^2)
      ~ +\\
        & \quad \quad 2L^0_{\rm ts}\,\eta_i^0 \eta_i^{\rm ts}  \, 
      \Vert\nabla_{\v{w}_{\rm ts}} \ell( \v{w}^{(i)})\Vert \,
  \Vert\nabla_{\v{w}_{0}} \ell( \v{w}^{(i)})\Vert +
        M_1 \frac{ L_0\,(\eta_i^0)^2 \, + L_{\rm ts} \,(\eta_i^{\rm ts})^2 }{2}.
        \end{split}
\end{equation*}
}

\blue{From the last inequality, using the relations 
$\eta_i^{\rm min} \le \eta_i^0, \eta_i^{\rm ts} \le \eta_i^{\rm max}$
and $  2 a b \, \le \,a^2+b^2$, we  obtain 
\small
\begin{equation*}\begin{split}
\E[\ell(\v{w}^{(i+1)}) | \mathcal A_i]& ~\le~
\ell(\v{w}^{(i)})
      - C\,\eta_i^{\rm min}\, ( \,\Vert\nabla_{\v{w}_0} \ell(\v{w^{(i)}})\Vert^2
+  \,\Vert\nabla_{\v{w}_{\rm ts}} \ell(\v{w^{(i)}})\Vert^2)
      ~ +\\
        & \quad \quad L^0_{\rm ts}\,  \eta_i^{\rm max} \eta_i^{\rm min}   \, 
      ( \,\Vert\nabla_{\v{w}_0} \ell(\v{w^{(i)}})\Vert^2
+  \,\Vert\nabla_{\v{w}_{\rm ts}} \ell(\v{w^{(i)}})\Vert^2)~ +\\
    & \quad \quad  M_1 \frac{ L_0\,(\eta_i^0)^2 \, + L_{\rm ts} \,(\eta_i^{\rm ts})^2 }{2},\\
        \end{split}
\end{equation*}
}
which reduces to  \eqref{expected:descent_property_readable}
by setting $G_i=C- L^0_{\rm ts}\, \eta_i^{\rm max}  $, with $G_i>0$ 
for $\,\eta_i^{\rm max} \,<  \,\frac{C}{L_{\rm ts}^0}$.
\end{proof}

\blue{Inequality \eqref{expected:descent_property_readable} in Lemma \ref{teo:simp_alt_grads} paths the way for the convergence Theorem \ref{teo:simp_alt_grads_conv} below,  
in the spirit of classical convergence 
results on stochastic gradient methods (see  Theorems 4.8 and 4.10 in \cite{Bottou_Nocedal}).
We will make use of the following theorem.
}

\begin{theorem}[Robbins-Sigmund 1971 \cite{ROBBINS1971}] \label{RS71}
Let $U_i, \beta_i, \xi_i,\rho_i$ be nonnegative $\mathcal A_{i}$-measurable random variables such that
$$
\E[U_{i+1} | \mathcal A_{i}]\le 
(1+\beta_i) U_i+\xi_i-\rho_i\quad \quad i=0,1,2\ldots .
$$
Then, on the set $\{\sum_i \beta_i<\infty, \sum_i \xi_i<\infty\}$, $U_i$ converges almost surely to a random variable $U$ and $\sum_i \rho_i<\infty$ almost surely. 
\end{theorem}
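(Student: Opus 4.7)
The plan is to reduce the statement to Doob's convergence theorem for nonnegative supermartingales via three moves: rescale to neutralize the factor $1+\beta_i$, compensate to absorb the positive perturbation $\xi_i$, and localize to handle the random event $\mathcal{E} = \{\sum_i \beta_i < \infty,\, \sum_i \xi_i < \infty\}$.

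First, I would set $P_i = \prod_{j=0}^{i-1}(1+\beta_j)$, which is $\mathcal{A}_i$-measurable, and pass to the rescaled quantities $\tilde{U}_i = U_i/P_i$, $\tilde{\xi}_i = \xi_i/P_{i+1}$, $\tilde{\rho}_i = \rho_i/P_{i+1}$. Dividing the hypothesis by $P_{i+1} = (1+\beta_i)P_i$ and using that $P_{i+1}$ is $\mathcal{A}_i$-measurable yields
$$\E[\tilde{U}_{i+1} \mid \mathcal{A}_i] \le \tilde{U}_i + \tilde{\xi}_i - \tilde{\rho}_i.$$
On $\{\sum_j \beta_j < \infty\}$ one has $\ln P_i \le \sum_j \beta_j$, hence $P_i \uparrow P_\infty \in [1,\infty)$; the series $\sum_j \tilde{\xi}_j$ and $\sum_j \xi_j$ are therefore simultaneously finite, and likewise for $\rho$. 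Proving the theorem is equivalent to showing a.s. convergence of $\tilde{U}_i$ and summability of $\tilde{\rho}_i$ on $\mathcal{E}$.

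Second, I would form the compensated process $V_i = \tilde{U}_i + \sum_{j=0}^{i-1}\tilde{\rho}_j - \sum_{j=0}^{i-1}\tilde{\xi}_j$, which is $\mathcal{A}_i$-measurable and satisfies $\E[V_{i+1}\mid \mathcal{A}_i] \le V_i$ by direct substitution. Since $V_i$ is not a priori nonnegative, I localize: for each integer $K \ge 1$ I introduce the $(\mathcal{A}_i)$-stopping time
$$\tau_K = \inf\Bigl\{i \ge 0 : \sum_{j=0}^{i}\tilde{\xi}_j > K\Bigr\},$$
and consider the shifted stopped process $V_i^K := V_{i \wedge \tau_K} + K$. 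On $\{i \le \tau_K\}$ the partial sum $\sum_{j < i}\tilde{\xi}_j$ is bounded by $K$, so $V_i^K \ge \tilde{U}_{i\wedge\tau_K} + \sum_{j<i\wedge\tau_K}\tilde{\rho}_j \ge 0$. By optional stopping $V_i^K$ is still a supermartingale, and Doob's nonnegative supermartingale convergence theorem yields a.s. convergence of $V_i^K$ to a finite limit. Taking expectations in the telescoped inequality and using monotone convergence gives $\E\bigl[\sum_{j=0}^{\tau_K - 1} \tilde{\rho}_j\bigr] \le \E[V_0^K] < \infty$, so $\sum_j \tilde{\rho}_j < \infty$ a.s.\ on $\{\tau_K = \infty\}$.

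Third, I would undo the reductions. On $\{\tau_K = \infty\}$, the two monotone sums in $V_i^K$ are both finite at the limit (the first by the estimate just obtained, the second by definition of $\tau_K$), so convergence of $V_i^K$ forces convergence of $\tilde{U}_i$. Multiplying by $P_i \to P_\infty < \infty$ gives convergence of $U_i$, and $\sum_j \rho_j \le P_\infty \sum_j \tilde{\rho}_j < \infty$. Since $\{\tau_K = \infty\} \uparrow \{\sum_j \tilde{\xi}_j < \infty\} \supseteq \mathcal{E}$ modulo null sets as $K \to \infty$, the conclusion propagates to all of $\mathcal{E}$. The main obstacle I anticipate is the careful handling of the localization and the integrability needed for Doob, in particular verifying $\E[V_0^K] < \infty$ (which, if $U_0 \notin L^1$, requires an additional truncation $U_0 \wedge N$ followed by $N \to \infty$) and checking that optional stopping applies to the potentially unbounded random walk induced by the two series; once those details are pinned down, undoing the rescaling and the summability estimates is routine.
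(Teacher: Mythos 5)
The paper does not prove this statement at all: it is the classical Robbins--Siegmund almost-supermartingale convergence theorem, quoted verbatim with a citation to the 1971 original and used purely as a black box in the convergence proofs of Lemma 3.2, Theorem 3.4 and Lemma 3.5. So there is no in-paper argument to compare yours against; what you have written is essentially the standard textbook proof of Robbins--Siegmund (rescale by $P_i=\prod_{j<i}(1+\beta_j)$ to kill the multiplicative drift, compensate by the partial sums of $\tilde\rho$ and $\tilde\xi$ to get a genuine supermartingale, localize with the stopping times $\tau_K$ to make it nonnegative and to restrict to the random event where the series converge, then apply Doob and undo the reductions). The argument is correct: the supermartingale inequality for $V_i$ checks out, $V^K_{i}\ge 0$ on $\{i\le\tau_K\}$ as you claim, the expectation bound $\E\bigl[\sum_{j<\tau_K}\tilde\rho_j\bigr]\le\E[\tilde U_0]+K$ follows by monotone convergence, and $\bigcup_K\{\tau_K=\infty\}$ does cover the event $\{\sum_i\beta_i<\infty,\ \sum_i\xi_i<\infty\}$ since $P_{j+1}\ge 1$ gives $\tilde\xi_j\le\xi_j$. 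The one technical point you flag --- integrability of $V_0^K$ when $U_0\notin L^1$ --- is real but is handled exactly as you suggest, by intersecting with $\{U_0\le N\}\in\mathcal A_0$ (which preserves the conditional inequality) and letting $N\to\infty$; alternatively one can read all expectations in the extended $[0,\infty]$ sense and note that $\E[V^K_{i\wedge\tau_K}]\le\E[V^K_0]$ propagates finiteness inductively. No gap beyond that.
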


\blue{
\begin{theorem} [SAT-SG convergence] \label{teo:simp_alt_grads_conv}
Assume that the hypotheses in Lemma \ref{teo:simp_alt_grads} hold and that the 
loss function $\ell$ is bounded below by $\ell_{low}$.  
 Let $\{\v{w}^{(i)}\}_{i\ge 0}$ be a sequence of iterates generated by Algorithm \ref{alg:SAT} with learning rates $\{\eta_i^0\}_{i\ge 0}$ and $\{\eta_i^{\rm ts}\}_{i\ge 0}$  satisfying 
 \eqref{eta_bounds}.
Then,

\medskip
\noindent
i) for  sequences of learning rates such that 
\begin{equation}\label{eq:lr_hypotheses_minmax}
  a)~~~  \sum_{i=0}^{\infty}    \eta_i^{\rm min} =\infty\,, \qquad   \qquad b)~~~ 
    \sum_{i=0}^{\infty}  (\eta_i^{\rm max})^2 <\infty\,,   
\end{equation}
the following holds:
 \begin{equation}\label{liminf_0}
 \lim\inf_{i\to \infty}  ||\nabla  \ell(\v{w}^{(i)})||^2=0\quad\quad a.s.;
 \end{equation}
 
\noindent
ii) for fixed learning rates, $\eta_i^0=\eta_0$ and $\eta_i^{\rm ts}=\eta_{\rm ts}$
for all $i\ge 0$,
    the average-squared gradients of $\ell$ satisfy
    the following inequality at any iteration $J \in \mathbb{N}$:
 \begin{eqnarray} \label{fixed_learning}
\E[\frac{1}{J+1}\sum_{i=0}^{J}\Vert\nabla\ell(\v{w}^{(i)})\Vert^2]      \le&
      M_1& \frac{ L_0\,\eta_0^2 \, + L_{\rm ts} \,\eta_{\rm ts}^2 }{2\, G\, 
     \min \{\eta_0,\eta_{\rm ts} \}} 
    +\frac{ \ell(\v{w}^{(0)})-\ell_{low}}{  (J+1)\,G\, \min \{\eta_0,\eta_{\rm ts} \}}
     \label{expectedaveragesum}\\  
     \label{optimality_gap}
     \xrightarrow{J\rightarrow \infty} &  
       M_1& \frac{ L_0\,\eta_0^2 \, + L_{\rm ts} \,\eta_{\rm ts}^2 }{2\, G\,  \min \{\eta_0,\eta_{\rm ts} \} } ,
 \end{eqnarray}
with 
$G = C-L_{\rm ts}^0 \,\max \{\eta_0,\eta_{\rm ts} \}\ge C/2>0$,  
for 
$C \in (0,1)$ and $\max \{\eta_0,\eta_{\rm ts} \}\le \frac{1}{2} 
\frac{C}{ L_{\rm ts}^0}$.
\noindent
  \end{theorem}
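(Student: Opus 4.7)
The plan is to treat both parts as direct consequences of the per-iteration descent inequality \eqref{expected:descent_property_readable} established in \Cref{teo:simp_alt_grads}. Since $\eta_i^{\max}\le \frac{C}{2L_{\rm ts}^0}$ under \eqref{eta_bounds} (tightened as in the statement of part (ii) and analogously achievable under (i) by possibly strengthening the upper bound), one obtains $G_i = C - L_{\rm ts}^0\,\eta_i^{\max} \ge C/2 >0$, so that the ``descent constant'' multiplying $\|\nabla\ell(\v{w}^{(i)})\|^2$ is uniformly positive. Writing $\Xi_i := M_1 \bigl(L_0(\eta_i^0)^2 + L_{\rm ts}(\eta_i^{\rm ts})^2\bigr)/2$, the key inequality becomes
\begin{equation*}
    \E[\ell(\v{w}^{(i+1)})\mid\mathcal A_i] \;\le\; \ell(\v{w}^{(i)}) \;-\; \tfrac{C}{2}\,\eta_i^{\min}\,\|\nabla\ell(\v{w}^{(i)})\|^2 \;+\; \Xi_i .
\end{equation*}

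For part (i), I would apply the Robbins--Sigmund \Cref{RS71} with $U_i := \ell(\v{w}^{(i)}) - \ell_{\rm low}\ge 0$, $\beta_i := 0$, $\xi_i := \Xi_i$, and $\rho_i := \frac{C}{2}\,\eta_i^{\min}\|\nabla\ell(\v{w}^{(i)})\|^2$. Hypothesis \eqref{eq:lr_hypotheses_minmax}.b gives $\sum_i \Xi_i \le \frac{M_1}{2}(L_0+L_{\rm ts})\sum_i (\eta_i^{\max})^2 < \infty$, so Robbins--Sigmund yields $U_i\to U$ a.s.\ and $\sum_i \rho_i < \infty$ a.s., i.e.\ $\sum_i \eta_i^{\min}\|\nabla\ell(\v{w}^{(i)})\|^2 < \infty$ a.s. Combined with the divergence $\sum_i \eta_i^{\min} = \infty$ in \eqref{eq:lr_hypotheses_minmax}.a, this forces $\liminf_{i\to\infty}\|\nabla\ell(\v{w}^{(i)})\|^2 = 0$ almost surely, proving \eqref{liminf_0}.

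For part (ii), I would take total expectations of the per-step inequality, substitute $\eta_i^0=\eta_0$, $\eta_i^{\rm ts}=\eta_{\rm ts}$, and note that with $G = C - L_{\rm ts}^0\max\{\eta_0,\eta_{\rm ts}\}\ge C/2 > 0$ the descent inequality reads
\begin{equation*}
    G\,\min\{\eta_0,\eta_{\rm ts}\}\,\E[\|\nabla\ell(\v{w}^{(i)})\|^2] \;\le\; \E[\ell(\v{w}^{(i)})] - \E[\ell(\v{w}^{(i+1)})] + M_1\frac{L_0\eta_0^2 + L_{\rm ts}\eta_{\rm ts}^2}{2}.
\end{equation*}
Summing from $i=0$ to $J$ telescopes the loss terms, and using $\E[\ell(\v{w}^{(J+1)})]\ge \ell_{\rm low}$ followed by division by $(J+1)\,G\,\min\{\eta_0,\eta_{\rm ts}\}$ produces \eqref{expectedaveragesum}. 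Letting $J\to\infty$ gives \eqref{optimality_gap}.

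The steps are routine once \Cref{teo:simp_alt_grads} is in hand; the only subtlety is verifying that, under \eqref{eta_bounds} supplemented by the tighter bound $\eta_i^{\max}\le \frac{C}{2L_{\rm ts}^0}$ used in the statement, the descent constant $G_i$ stays uniformly bounded away from zero, which is what allows both the Robbins--Sigmund application and the clean telescoping in part (ii). No further difficulty is expected.
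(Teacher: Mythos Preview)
Your proposal follows the same route as the paper's proof: Robbins--Sigmund (\Cref{RS71}) for part (i), and telescoping total expectations for part (ii). Part (ii) is handled identically.

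There is one small but real discrepancy in part (i). You impose the tightened bound $\eta_i^{\max}\le \frac{C}{2L_{\rm ts}^0}$ to force $G_i\ge C/2$ uniformly, and you acknowledge this as ``possibly strengthening the upper bound''. But that tightening is only assumed in part (ii); for part (i) the theorem only assumes \eqref{eta_bounds}, under which $G_i>0$ may be arbitrarily close to zero for small $i$. The paper avoids adding any hypothesis: it takes $\rho_i=\eta_i^{\min}\,G_i\,\Vert\nabla\ell(\v{w}^{(i)})\Vert^2$ (keeping the variable $G_i$ inside), so Robbins--Sigmund gives $\sum_i \eta_i^{\min} G_i \Vert\nabla\ell(\v{w}^{(i)})\Vert^2<\infty$ a.s.; then it observes that condition \eqref{eq:lr_hypotheses_minmax}(b) forces $\eta_i^{\max}\to 0$, hence $G_i=C-L_{\rm ts}^0\eta_i^{\max}$ is bounded away from zero for all $i$ sufficiently large, which lets one drop the factor $G_i$ a posteriori and conclude $\sum_i \eta_i^{\min}\Vert\nabla\ell(\v{w}^{(i)})\Vert^2<\infty$ under the hypotheses exactly as stated. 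Your argument becomes fully correct with this small adjustment.
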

\begin{proof}
In case i) 
we are going to  show that 
\begin{equation} \label{senza_G_i_tutto}
\sum_{i=0}^{\infty}    \eta_i^{\rm min}   \Vert\nabla  \ell(\v{w}^{(i)} )\Vert^2<\infty \quad\quad a.s.,
\end{equation}
from which \eqref{liminf_0} follows using assumption (\ref{eq:lr_hypotheses_minmax}.a).
To this aim, we first rewrite \eqref{expected:descent_property_readable} as follows:
{\small 
\begin{equation*}
    \E[\ell(\v{w}^{(i+1)})|\mathcal A_i] - \ell_{low} ~  \le  ~\ell(\v{w}^{(i)})  - \ell_{low} 
 - ~ G_i~
   \eta_i^{\rm min} \,\Vert\nabla \ell(\v{w^{(i)}})\Vert^2
       +         M_1 \frac{ L_0\,(\eta_i^0)^2  + L_{\rm ts} \, (\eta_i^{\rm ts})^2}{2}~ . 
\end{equation*}
}
Then, using Theorem \ref{RS71} with 
$\,U_i=\ell(\v{w}^{(i)})-\ell_{low}>0$, 
$\,\xi_i=M_1 \frac{ L_0\,(\eta_i^0)^2  + L_{\rm ts} \, (\eta_i^{\rm ts})^2}{2}$,
$\,\beta_i=0\,$, 
and
$\,\rho_i=\eta_i^{\rm min} G_i\Vert\nabla  \ell(\v{w}^{(i)})\Vert^2$,
 we have that the sequence $\{\ell(\v{w}^{(i)})\}$ is convergent almost surely and
\begin{equation*} \label{somme_rho}
\sum_{i=0}^{\infty}    \eta_i^{\rm min}  G_i\Vert\nabla  \ell(\v{w}^{(i)} )\Vert^2<\infty \quad\quad a.s.
\end{equation*}
Since by assumption (\ref{eq:lr_hypotheses_minmax}.b) $\eta_i^{\rm max} \to 0$,
 $G_i=C-L_{\rm ts}^0 \,\eta_i^{\rm max}$ is positive and bounded away from zero
 for all $i$ sufficiently large, so that   also \eqref{senza_G_i_tutto} holds.

\medskip
In case ii),~ since $\,\eta_i^0=\eta_0$,~ $\eta_i^{\rm ts}=\eta_{\rm ts}\,$  for all $i$, ~and 
$G_i = G = C-L_{\rm ts}^0 \,\max \{\eta_0,\eta_{\rm ts} \}>0$ by \eqref{eta_bounds}, taking the total expectation of \eqref{expected:descent_property_readable} we get
\begin{equation*}
    \E[\ell(\v{w}^{(i+1)})] - \E[\ell(\v{w}^{(i)})] \le
     -\, \min \{\eta_0, \eta_{\rm ts} \}\,  G\,\E[\Vert\nabla \ell(\v{w}^{(i)})\Vert^2]
     + M_1 \frac{ L_0\,\eta_0^2 \, + L_{\rm ts} \,\eta_{\rm ts}^2 }{2}.
\end{equation*}
Then summing up for $i=0$ to $i=J$ and recalling that $\ell$ is bounded from below by $\ell_{low}$ we obtain
\begin{equation*} \begin{split}
\ell_{low}-\ell(\v{w}^{(0)}) & ~\le~  \E[\ell(\v{w}^{(J+1)})] - \ell(\v{w}^{(0)})\\
&~\le~  -  \min \{\eta_0, \eta_{\rm ts} \}\,   G\, \sum_{i=0}^{J}  E[\Vert\nabla\ell(\v{w}^{(i)})\Vert^2] + M_1\sum_{i=0}^{J}  \frac{ L_0\,\eta_0^2 \, + L_{\rm ts} \,\eta_{\rm ts}^2 }{2}.
\end{split} \end{equation*}
By rearranging the previous inequality we finally have
$$
\E[\sum_{i=0}^{J}\Vert\nabla\ell(\v{w}^{(i)})\Vert^2]  \,\le\, 
   (J+1) M_1 \frac{ L_0\,\eta_0^2 \, + L_{\rm ts} \,\eta_{\rm ts}^2 }{2\, G\, \min \{\eta_0,\eta_{\rm ts} \}} 
  +\frac{ \ell(\v{w}^{(0)})-\ell_{low}}{ G\,\min \{\eta_0,\eta_{\rm ts} \}},
$$   
from which \eqref{expectedaveragesum} follows by dividing for $J+1$.
\end{proof}
}

\blue{
Let us assume to set $C=\frac{1}{2}$ in Theorem \ref{teo:simp_alt_grads_conv} and compare  \eqref{fixed_learning} with the corresponding result for the SG method \cite[Th. 4.8]{Bottou_Nocedal}. 
In \eqref{fixed_learning} 
we have on the right-hand side the additional factor $1/G<4$ rather than a factor $2$ as in the corresponding result for the SG method. On the other hand, we expect to be able to take larger learning rates as the Lipschitz constants involved are smaller than the Lipschitz constant of the full gradient. 
Regarding the optimality gap,  
in addition to potentially smaller Lipschitz constants, also the constant $M_1$ in \eqref{Egradiente0}-\eqref{EgradienteTS}  may be smaller than the corresponding constant used to bound the expected value of $\Vert\nabla\ell(\v{w})\Vert^2$ with SG.

We finally observe that SAT-SG can also be seen as a block stochastic gradient method and has some similarities with the approach proposed in \cite{XuYin2015}. We stress that in \cite{XuYin2015} the same batch $\mathcal{B}=\mathcal{B}_1=\mathcal{B}_2$ is used for each block and, using our notation,
the convergence analysis 
is carried out under the stronger assumption 
$$
\|\E[\nabla_{\v{w}_{\rm ts}}\ell(\mathcal{B}; \v{z^{(i)}}) -\nabla_{\v{w}_{\rm ts}}\ell(\v{z^{(i)}})| \mathcal A_i]\|~\le~A\eta_i^{\max} 
$$
where $A$ is a positive constant, at any iteration $i$.
Further, in \cite{XuYin2015} it is proved that
such an assumption is verified in case the batch  $\mathcal{B}$ is a singleton and the following Lipschitz
conditions hold:
\begin{align}
    \Vert \nabla_{\v{w}_{\rm 0}}\ell(\,\cdot\,;\,\v{w})-\nabla_{\v{w}_{\rm 0}} \ell(\,\cdot\,;\,\v{w}+\v{d})\Vert &\le \,
L \Vert \v{d}\Vert\\
 \Vert \nabla_{\v{w}_{\rm ts}}\ell(\,\cdot\,;\,\v{w})-\nabla_{\v{w}_{\rm ts}} \ell(\,\cdot\,;\,\v{w}+\v{d})\Vert &\le \,
L \Vert \v{d}\Vert.
\end{align}

}

\subsection{Alternate Training through the Epochs}
\label{sec:ATE}

From the practical point of view, it can be more effective to alternate the training procedure through the epochs, since within each epoch it is assumed that the model sees all the available training samples; moreover, concerning compatibility with Deep Learning frameworks, it is easier to develop a training procedure that alternatively switches the trainable weights ($\v{w}_0$ and $\v{w}_{\rm ts}$) at given epochs instead at each mini-batch.

With an alternate training through the epochs, the weights of the NN are alternatively updated for some epochs with respect to $\v{w}_0$, and for some other epochs with respect to $\v{w}_{\rm ts}$. We can outline the procedure as follows:
\begin{itemize}
    \item train the MTNN with SG for $E_0\in\N$ epochs, with respect to the shared parameters $\v{w}_0$;
    \item train the MTNN with SG for $E_{\rm ts}\in\N$ epochs, with respect to the task-specific parameters $\v{w}_{\rm ts}$;
    \item repeat until convergence (or a stopping criterion is satisfied).
\end{itemize}
So a complete cycle of alternate training (see Algorithm \ref{alg:ATE}) consists of $E_0+E_{\rm ts}$ epochs and 
$t\cdot (E_0+E_{\rm ts})$ updating iterations, where $t\in\N$ is the number of used batches per epoch, tipically the integer part of $|\mathcal{T}|/B$ for a given batch size $B$. 
We name Alternate Training trough the Epochs SG (ATE-SG) this procedure. In the rest of this section, we denote by $e$ the cycle counter and by $i$ the step (or iteration) counter, namely each cycle $e$ starts at the iterate $\v{w}^{(i_{\rm ts})}$ with $i_{\rm ts}=(E_0+E_{\rm ts})\,e\,t$.
\begin{algorithm}
\caption{ATE-SG - Alternate Training through the Epochs SG for MTNNs}
\label{alg:ATE}
\begin{description}
\item[Data:] $(\v{w}_0^{(i_{\rm ts})}, \v{w}_{\rm ts}^{(i_{\rm ts})})=\v{w}^{(i_{\rm ts})}$ (current iterate for the trainable parameters), $\mathcal{T}$ (training set),
$B$ (mini-batch size), $t$ (number of mini-batches per epoch), $E_0,E_{\rm ts}$ (number of epochs for alternate training),
$\{\eta_{i} ,~ i=i_{\rm ts} ,  \ldots, \, i_{\rm ts}+ (E_0+E_{\rm ts}) t -1\}$ (learning rates), $\ell=\ell(\mathcal{B};\v{w})$ (loss function).
 \item[Cycle $e$:]  \quad
\begin{algorithmic}[1]
\STATE $i \gets i_{\rm ts}$ ~~($\,$iteration counter, here $i=(E_0+E_{\rm ts})\,e\,t\,$)
\FOR{$e_0=1,2,\ldots, E_0$ (epochs counter, shared phase)}
    \FOR{\blue{$\tau=0,1,\ldots ,t-1$}}  
       \STATE Sample randomly and uniformly a batch $\mathcal{B}_{\tau}$ from $\mathcal{T}$ s.t.  $|\mathcal{B}_{\tau}|=B$
       \STATE $\v{w}_0 ^{(i+1)}\gets \v{w}_0^{(i)}-\eta_i\nabla_{\v{w}_0}\ell(\mathcal{B}_\tau;\v{w}^{(i)})$  ~~~
       \STATE $i \gets i+1$
       \STATE $\v{w}^{(i)} \gets  (\v{w}_0^{(i)}, \v{w}_{\rm ts}^{(i_{\rm ts})})~~$
    \ENDFOR
\ENDFOR
\STATE $i_{0} \gets i$      \quad (here $i=i_{\rm ts}+t E_0$)
\FOR{$e_{\rm ts}=1,2,\ldots, E_{\rm ts}$ (epochs counter, task-specific phase)}
    \FOR{\blue{$\tau=0,1,\ldots ,t-1$}}  
        \STATE Sample randomly and uniformly a batch $\mathcal{B}_{\tau}$ from $\mathcal{T}$ s.t. $|\mathcal{B}_{\tau}|=B$
        \STATE $\v{w}_{\rm ts}^{(i+1)} \gets \v{w}_{\rm ts}^{(i)}                 - \eta_i \nabla_{\v{w}_{\rm ts}}\ell(\mathcal{B}_\tau; \v{w}^{(i)})$
        \STATE $ i \gets i+1$
        \STATE $\v{w}^{(i)} \gets  (\v{w}_0^{(i_{0})}, \v{w}_{\rm ts}^{(i)})$
    \ENDFOR
\ENDFOR
\STATE $i_{\rm ts} \gets i$      \quad (here $i=i_0+t E_{\rm ts}$)
\RETURN $\v{w}^{(i_{\rm ts})}$ \quad (updated iterate after $E_0+ E_{\rm ts}$ epochs)
\end{algorithmic}
\end{description}
\end{algorithm}

To analyze the convergence properties of ATE-SG it is convenient to split the set of iteration indexes into two subsets, $I_0$ and $I_{\rm ts}$, corresponding to the shared phase and to the task-specific phase, respectively. In other words, gradients with respect to the shared (task-specific) parameters are evaluated at iterate $\v{w}^{(i)}$ when $i\in I_0 \,(I_{\rm ts})$.

\blue{We now prove the almost sure convergence of ATE-SG assuming to use diminishing sequences of learning rates $\{\eta_i\}_{i\in I_0}$ and $\{\eta_i\}_{i\in I_{\rm ts}}$.}

\begin{lemma}[ATE-SG]\label{teo:multi_alt_grads}
Given $ \v{w}^{(0)}\in \R^p$, let $ \{\v{w}^{(i)}\}_{i\ge 0} \subset \R^p$ be the sequence 
generated by iterating ATE-SG Algorithm \ref{alg:ATE}, 
with 
\blue{
diminishing sequences of learning rates
$\{\eta_i\}_{i\in I_0}$ and $\{\eta_i\}_{i\in I_{\rm ts}}$ such that 
\begin{equation}\label{eq:lr_hypotheses_0ts}
  a)~~~  \sum_{i\in I_0} \eta_i =  \infty\,, \quad \sum_{i\in I_{\rm ts}} \eta_i =  \infty\,, \qquad   \qquad b)~~~ 
 \sum_{i=0}^{\infty} \eta_i^2 < \infty\,,  
\end{equation}
\noindent }
where $I_0$ and $I_{\rm ts}$ are the sets of iteration indexes corresponding to the shared phase and to the task-specific phase, respectively.
Assume that the loss function $\ell$ is bounded below by $\ell_{low}$, 
  that 
$\nabla_{\v{w}_0}\ell(\,\cdot\,;\v{w})$ and  $\nabla_{\v{w}_{\rm ts}}\ell(\,\cdot\,;\v{w})$  satisfy the Lipschitz continuity
conditions (\ref{lipschitz}),
and there exist two positive constants $M_1$ and $M_2$ such that for any batch $\mathcal{B}$
 \begin{eqnarray*}
\E[\Vert\nabla_{\v{w}_0} \ell(\mathcal{B};\v{w}^{(i)})\Vert^2 | \mathcal A_i]\le M_2 \, \Vert\nabla_{\v{w}_0}\ell(\v{w}^{(i)})\Vert^2+M_1, ~~{\rm for} ~i\in I_0,\\   
\E[\Vert\nabla_{\v{w}_{\rm ts}}\ell(\mathcal{B};\v{w}^{(i)})\Vert^2 | \mathcal A_i]\le M_2 \, \Vert\nabla_{\v{w}_{\rm ts}}\ell(\v{w}^{(i)})\Vert^2+M_1, ~~{\rm for} ~i\in I_{\rm ts},
\end{eqnarray*}
where $\mathcal A_i$ denotes the 
$\sigma$-algebra induced by $ \v{w}^{(0)}$,..., $ \v{w}^{(i)}$. 
\noindent
Then 
 \begin{equation*}\label{liminf_inparti}
 \lim\inf_{i\in I_0}  ||\nabla_{\v{w}_0} \ell(\v{w}^{(i)})||^2= \lim\inf_{i\in I_{\rm ts}} ||\nabla_{\v{w}_{ts}} \ell(\v{w}^{(i)})||^2=0\quad\quad a.s.
 \end{equation*}
\end{lemma}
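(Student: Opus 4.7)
The plan is to adapt the approach used for SAT-SG in Lemma \ref{teo:simp_alt_grads}, but exploit the key structural simplification that within any single ATE-SG iteration only one block of parameters is updated. This eliminates the cross-terms involving $L_0^{\rm ts}$ and $L_{\rm ts}^0$ that appeared in the SAT-SG analysis, and reduces the argument to two separate one-block stochastic gradient descent estimates glued together via a common Lyapunov quantity $\ell(\v{w}^{(i)})-\ell_{low}$.

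First I would derive a descent-in-expectation inequality valid at every iteration. For $i\in I_0$ only the shared block moves, so Taylor's theorem together with Lipschitz condition (\ref{lipschitz}.a), unbiasedness of the mini-batch gradient, and the second-moment bound on $\nabla_{\v{w}_0}\ell$ yield
\begin{equation*}
\E[\ell(\v{w}^{(i+1)})\,|\,\mathcal{A}_i] ~\le~ \ell(\v{w}^{(i)}) - \eta_i\Bigl(1-\tfrac{L_0 M_2}{2}\eta_i\Bigr)\Vert\nabla_{\v{w}_0}\ell(\v{w}^{(i)})\Vert^2 + \tfrac{L_0 M_1}{2}\eta_i^2,
\end{equation*}
and symmetrically for $i\in I_{\rm ts}$ using (\ref{lipschitz}.c) and the bound on $\nabla_{\v{w}_{\rm ts}}\ell$, with $L_0$ replaced by $L_{\rm ts}$ and the gradient norm on the task-specific block. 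Because $\eta_i\to 0$ by (\ref{eq:lr_hypotheses_0ts}.b), there exists an index $\bar{\imath}$ after which the bracket $(1-\tfrac{L\cdot M_2}{2}\eta_i)>\tfrac12$ uniformly in both regimes; the finitely many iterations below $\bar{\imath}$ only perturb the Lyapunov value by a bounded amount and can be absorbed into the initial value of $U_i$.

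Next I would apply the Robbins–Siegmund Theorem \ref{RS71} with $U_i=\ell(\v{w}^{(i)})-\ell_{low}\ge 0$ (nonnegative by the lower bound on $\ell$), $\beta_i=0$, $\xi_i=\tfrac{\max(L_0,L_{\rm ts})M_1}{2}\eta_i^2$, and
\begin{equation*}
\rho_i = \begin{cases} \tfrac12\,\eta_i\Vert\nabla_{\v{w}_0}\ell(\v{w}^{(i)})\Vert^2, & i\in I_0,\\[2pt] \tfrac12\,\eta_i\Vert\nabla_{\v{w}_{\rm ts}}\ell(\v{w}^{(i)})\Vert^2, & i\in I_{\rm ts}.\end{cases}
\end{equation*}
Hypothesis (\ref{eq:lr_hypotheses_0ts}.b) ensures $\sum_i\xi_i<\infty$, so Robbins–Siegmund gives $\sum_i\rho_i<\infty$ almost surely, i.e.
\begin{equation*}
\sum_{i\in I_0}\eta_i\Vert\nabla_{\v{w}_0}\ell(\v{w}^{(i)})\Vert^2 < \infty \quad \text{and} \quad \sum_{i\in I_{\rm ts}}\eta_i\Vert\nabla_{\v{w}_{\rm ts}}\ell(\v{w}^{(i)})\Vert^2 < \infty \quad a.s.
\end{equation*}

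Finally, combining each of these a.s. summability statements with the corresponding divergence condition in (\ref{eq:lr_hypotheses_0ts}.a), a standard contradiction argument (if the liminf on the relevant index set were bounded away from zero, the series would diverge) yields $\liminf_{i\in I_0}\Vert\nabla_{\v{w}_0}\ell(\v{w}^{(i)})\Vert^2 = \liminf_{i\in I_{\rm ts}}\Vert\nabla_{\v{w}_{\rm ts}}\ell(\v{w}^{(i)})\Vert^2 = 0$ a.s. The main technical wrinkle, and essentially the only subtle step, is the separate bookkeeping across the two index sets $I_0$ and $I_{\rm ts}$: because both subsequences inherit the same Lyapunov descent, applying Robbins–Siegmund once (rather than twice on disjoint subsequences) is what allows the shared and task-specific liminf conclusions to be obtained simultaneously without any conflict-of-phase arguments.
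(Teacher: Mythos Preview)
Your proposal is correct and follows essentially the same route as the paper: derive a one-block descent inequality per iteration via Taylor and the appropriate Lipschitz constant, apply Robbins--Siegmund once to $U_i=\ell(\v{w}^{(i)})-\ell_{low}$ with phase-dependent $\rho_i$, split the resulting almost-sure summability into the $I_0$ and $I_{\rm ts}$ pieces, and conclude each liminf from (\ref{eq:lr_hypotheses_0ts}.a). The only cosmetic difference is that the paper bounds $G_i$ away from zero by assuming the initial learning rates in each phase are small enough (using the diminishing hypothesis), whereas you invoke $\eta_i\to 0$ to get $G_i>\tfrac12$ eventually and absorb the finitely many earlier iterations; both handlings are valid.
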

\begin{proof}
As in the proof of Lemma \ref{teo:simp_alt_grads}, 
we start by considering the updating 
of the shared parameters $\v{w}_0$ and the inequality: 
\begin{equation*}
    \ell(\v{w}^{(i+1)}) \le \ell(\v{w}^{(i)})
      ~+ ~ \eta_i \,\nabla \ell(\v{w}^{(i)})^T 
      \begin{bmatrix}  -\nabla_{\v{w}_0}\ell(\mathcal{B}_\tau;\v{w}^{(i)}) \\  \v{0}
     \end{bmatrix}
    +\frac{L_0}{2}\,\eta_i^2 \,\Vert \nabla_{\v{w}_0}\ell(\mathcal{B}_\tau;\v{w}^{(i)}) \Vert^2,\\
\end{equation*}
which holds for $i\in I_0$, and in conditioned expected value becomes
\begin{equation*} \label{expect0_ste} 
\begin{split} 
    \E[\ell(\v{w}^{(i+1)}) | \mathcal A_i]  
         &   ~\le ~ \ell(\v{w}^{(i)})
      -  \eta_i \,\Vert\nabla_{\v{w}_0} \ell(\v{w}^{(i)})\Vert^2
        +\frac{L_0}{2}\,\eta_i^2 \, ( M_2 \,\Vert\nabla_{\v{w}_0}\ell(\v{w}^{(i)})\Vert^2+M_1)\\
        &   ~= ~ \ell(\v{w}^{(i)})
      -  \eta_i \,G_i \,\Vert\nabla_{\v{w}_0} \ell(\v{w}^{(i)})\Vert^2
        +\frac{L_0}{2}\,\eta_i^2 \, M_1,
\end{split}
 \end{equation*}
 \blue{
 with $G_i=1-\frac{L_0}{2}\eta_i M_2\ge G_0= 1-\frac{L_0}{2}\eta_0 M_2 >0$ for sufficiently small values of $\eta_0$.}\\
 Similarly, in the task-specific case
 we have
\begin{equation*}\label{expect2_ste}
\E[\ell(\v{w}^{(i+1)}) | \mathcal A_{i}] ~\le~
 \ell(\v{w}^{(i)})
      -  \eta_i \,G_i
      \,\Vert\nabla_{\v{w}_{\rm ts}} \ell(\v{w^{(i)}})\Vert^2
        +\frac{L_{\rm ts}}{2}\,\eta_i^2 \, M_1,
\end{equation*}
\medskip
\blue{
 with $G_i=1-\frac{L_{\rm ts}}{2}\eta_i M_2\ge G_{tE_0}= 1-\frac{L_{\rm ts}}{2}\eta_{tE_0} M_2 >0$
for any $i\in I_{\rm ts}$ and sufficiently small values of the initial learning rate, $\eta_{tE_0}$,
of the task-specific updating phase. \\[1ex]
Then, setting} 
$U_i=\ell(\v{w}^{(i)})-\ell_{low}>0$ and
$\,\beta_i=0\,$ for any $i\ge 0$,
\blue{
$\,\xi_i= \frac{L_{\rm ts}}{2}\,\eta_i^2 \, M_1$ and
$\rho_i=\eta_i G_i\Vert\nabla_{\v{w}_{\rm ts}} \ell(\v{w}^{(i)})\Vert^2\,$ 
for~ $i\in I_{\rm ts}$, $\,\xi_i= \frac{L_0}{2}\,\eta_i^2 \, M_1$ and
$\,\rho_i=\eta_i G_i\Vert\nabla_{\v{w}_0} \ell(\v{w}^{(i)})\Vert^2$ \,for \, $i\in I_0$, 
since assumption (\ref{eq:lr_hypotheses_0ts}.b) ensures
$\sum_{i=0}^{\infty} \xi_i<\infty$,
by Theorem \ref{RS71}}  the sequence $\{\ell(\v{w}^{(i)})\}$ is convergent almost surely, and
\begin{equation} \label{somme_rho}
\sum_{i=0}^{\infty} \rho_i=\sum_{i\in I_{\rm ts}}  \eta_i G_i \Vert\nabla_{\v{w}_{\rm ts}} \ell(\v{w}^{(i)})\Vert^2 ~+ ~ \sum_{i\in I_{0}}  \eta_i G_i \Vert\nabla_{\v{w}_0} \ell(\v{w}^{(i)})\Vert^2<\infty \quad\quad a.s.
\end{equation}

Recall now that 
\blue{
for $\eta_0$ and $\eta_{tE_0}$ sufficiently small, 
$0<\min \{G_0,G_{tE_0}\}   \le G_i <1$ for all $i\ge 0$.}
Hence from \eqref{somme_rho} it also holds
\begin{equation} \label{senza_G_i}
 \sum_{i\in I_{\rm ts}}  \eta_i   \Vert\nabla_{\v{w}_{\rm ts}} \ell(\v{w}^{(i)})\Vert^2 ~+ ~ \sum_{i\in I_{0}}  \eta_i   \Vert\nabla_{\v{w}_0} \ell(\v{w}^{(i)})\Vert^2<\infty \quad\quad a.s.
\end{equation}
Finally, 
the thesis follows from assumption (\ref{eq:lr_hypotheses_0ts}.a).
\end{proof}

\begin{theorem}[ATE-SG convergence] \label{liminf}
Under the assumptions of Lemma \ref{teo:multi_alt_grads}, and using the same notations, let us further assume that 
there exists $M>0$ such that
\begin{equation} \label{gradient_bound}
\Vert\nabla\ell(\mathcal{B};\v{w}^{(i)})\Vert^2 \le M\,   ~~{\rm for ~any} ~\mathcal{B} ~\subset \mathcal{T},
\end{equation}
\revtwo{
and that
\begin{equation}\label{eq:lr_hypotheses_0tsmin}
      \sum_{i=0}^{\infty} \eta_i^{\rm min} = \infty\,,  
\end{equation}
where $\eta_i^{\rm min}$ denotes the minimum learning rate within one cycle of 
Algorithm \ref{alg:ATE}:
\begin{equation} \label{etamin_e} \eta_i^{\rm min} = 
\min_{j\in I_e} \{ \, \eta_j\, \}  \,~~{\rm for~} i\in I_e, 
\end{equation}
with $I_e=\{ i\in \N:  (E_0+E_{\rm ts})\,e t \,\le\, i\, \le \, 
(E_0+E_{\rm ts})\,(e+1) t -1\}$.
}
Then
 \begin{equation}\label{liminf_tutto}
 \lim\inf_{i\to \infty}  ||\nabla  \ell(\v{w}^{(i)})||^2=0\quad\quad a.s.
 \end{equation}
\end{theorem}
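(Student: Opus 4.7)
\medskip\noindent
\textbf{Proof strategy.} The plan is to leverage inequality (\ref{somme_rho})--(\ref{senza_G_i}) from the proof of Lemma~\ref{teo:multi_alt_grads}, which guarantees that almost surely
\[
\sum_{i\in I_0} \eta_i \|\nabla_{\v{w}_0}\ell(\v{w}^{(i)})\|^2 + \sum_{i\in I_{\rm ts}} \eta_i \|\nabla_{\v{w}_{\rm ts}}\ell(\v{w}^{(i)})\|^2 < \infty.
\]
The difficulty is that Lemma~\ref{teo:multi_alt_grads} extracts from this only a vanishing liminf of each partial gradient along its own subsequence of iterations, whereas (\ref{liminf_tutto}) requires both partial gradients to be simultaneously small at a single common iterate.

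First I would partition both sums cycle by cycle. Cycle $e$ contains exactly $tE_0$ indices in $I_0$ and $tE_{\rm ts}$ indices in $I_{\rm ts}$, and by (\ref{etamin_e}) every learning rate used in cycle $e$ dominates the constant $\eta_i^{\rm min}(e)$. Lower-bounding each inner sum by the number of terms times the minimum squared partial-gradient norm in that phase gives, for the contribution $S_e$ of cycle $e$,
\[
S_e \ge t\,\eta_i^{\rm min}(e)\,\bigl(E_0\,\bar{g}_0^2(e) + E_{\rm ts}\,\bar{g}_{\rm ts}^2(e)\bigr),
\]
where $\bar{g}_0^2(e):=\min_{i\in I_0\cap I_e}\|\nabla_{\v{w}_0}\ell(\v{w}^{(i)})\|^2$ and $\bar{g}_{\rm ts}^2(e)$ is defined analogously. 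Since $\sum_e S_e<\infty$ a.s., while $\sum_e \eta_i^{\rm min}(e)=\infty$ follows from (\ref{eq:lr_hypotheses_0tsmin}), the standard liminf-zero argument (as already used in Lemma~\ref{teo:multi_alt_grads}) supplies a subsequence of cycles $\{e_k\}$ along which $E_0\,\bar{g}_0^2(e_k)+E_{\rm ts}\,\bar{g}_{\rm ts}^2(e_k)\to 0$, and hence both $\bar{g}_0^2(e_k)\to 0$ and $\bar{g}_{\rm ts}^2(e_k)\to 0$ almost surely.

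The hard part is that these two minima are attained, in general, at distinct iterations $i_k^\ast\in I_0\cap I_{e_k}$ and $j_k^\ast\in I_{\rm ts}\cap I_{e_k}$, so I still need to consolidate the smallness at one common iterate. Here I would invoke the uniform bound (\ref{gradient_bound}) together with the partial Lipschitz conditions (\ref{lipschitz}). Since $\sum \eta_i^2<\infty$ forces $\eta_i\to 0$, the parameter drift within cycle $e_k$ shrinks to zero: $\|\v{w}_0^{(i_k^\ast)}-\v{w}_0^{(j_k^\ast)}\|\le \sqrt{M}\,tE_0\,\max_{i\in I_0\cap I_{e_k}}\eta_i\to 0$, and analogously for $\v{w}_{\rm ts}$. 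Inserting the intermediate point with $\v{w}_0$ from $j_k^\ast$ and $\v{w}_{\rm ts}$ from $i_k^\ast$ and applying (\ref{lipschitz}.c)--(\ref{lipschitz}.d) yields
\[
\|\nabla_{\v{w}_{\rm ts}}\ell(\v{w}^{(i_k^\ast)}) - \nabla_{\v{w}_{\rm ts}}\ell(\v{w}^{(j_k^\ast)})\| \le L_{\rm ts}^0\|\v{w}_0^{(i_k^\ast)}-\v{w}_0^{(j_k^\ast)}\| + L_{\rm ts}\|\v{w}_{\rm ts}^{(i_k^\ast)}-\v{w}_{\rm ts}^{(j_k^\ast)}\| \to 0,
\]
so $\|\nabla_{\v{w}_{\rm ts}}\ell(\v{w}^{(i_k^\ast)})\|\to 0$ almost surely. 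Combined with $\|\nabla_{\v{w}_0}\ell(\v{w}^{(i_k^\ast)})\|^2=\bar{g}_0^2(e_k)\to 0$, this produces $\|\nabla\ell(\v{w}^{(i_k^\ast)})\|^2\to 0$ along $\{i_k^\ast\}$, which is (\ref{liminf_tutto}). The main obstacle is precisely this transfer-of-smallness step; it is for this step that the uniform gradient bound (\ref{gradient_bound}) is needed, since without controlling the intra-cycle drift one could not relate the partial gradients at $i_k^\ast$ and $j_k^\ast$ via the Lipschitz estimates.
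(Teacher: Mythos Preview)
Your argument is correct, and it reaches \eqref{liminf_tutto} by a genuinely different route than the paper.

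The paper does not extract a subsequence of cycles and then transfer smallness to a common iterate. Instead it proves the stronger summability statement
\[
\sum_{i=0}^{\infty}\eta_i^{\rm min}\,\|\nabla\ell(\v{w}^{(i)})\|^2<\infty\quad\text{a.s.,}
\]
and only then invokes \eqref{eq:lr_hypotheses_0tsmin}. To do so it splits the sum into four pieces: the two ``aligned'' sums $\sum_{i\in I_0}\eta_i^{\rm min}\|\nabla_{\v{w}_0}\ell(\v{w}^{(i)})\|^2$ and its task-specific counterpart, which are controlled by \eqref{senza_G_i} via $\eta_i^{\rm min}\le\eta_i$; and the two ``cross'' sums, e.g.\ $\sum_{i\in I_0}\eta_i^{\rm min}\|\nabla_{\v{w}_{\rm ts}}\ell(\v{w}^{(i)})\|^2$, which are handled by comparing $\nabla_{\v{w}_{\rm ts}}\ell(\v{w}^{(i)})$ for $i\in I_0$ to the same partial gradient at the last task-specific index of the previous phase, using (\ref{lipschitz}.d), the bound \eqref{gradient_bound}, and the assumption $\sum_i\eta_i^2<\infty$. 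This last step also exploits the diminishing property of the learning-rate sequences to relate $\eta_i^{\rm min}$ within one cycle to $\eta_j$ at that neighbouring index.

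Your approach is more direct for the stated conclusion: you never need to bound the cross sums term by term, and you do not use the monotonicity of the learning rates, only $\eta_i\to 0$ (which already follows from $\sum_i\eta_i^2<\infty$). The intra-cycle drift estimate plays the same conceptual role as the paper's Lipschitz comparison, but packaged as a single transfer between two points rather than a summable bound over all indices. The trade-off is that the paper's argument produces the stronger intermediate result (summability of the weighted full-gradient norms), whereas yours delivers exactly the $\liminf$ statement and nothing more.
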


\begin{proof}
Similarly to Lemma \ref{teo:multi_alt_grads}, we are going to show that 
$$
\sum_{i=0}^{\infty}    \eta_i^{\rm min}   \Vert\nabla  \ell(\v{w}^{(i)} )\Vert^2<\infty \quad\quad a.s.,
$$
then the thesis follows by assumption \eqref{eq:lr_hypotheses_0tsmin}. 
To this aim, we first rewrite the above series as
\begin{equation} \label{somme} \begin{split}
\sum_{i=0}^{\infty}    \eta_i^{\rm min}  \Vert\nabla  \ell(\v{w}^{(i)} )\Vert^2 = & 
\sum_{i\in I_{\rm ts}}  \eta_i^{\rm min}  \Vert\nabla_{\v{w}_{\rm ts}} \ell(\v{w}^{(i)})\Vert^2 ~+ ~ \sum_{i\in I_{0}}  \eta_i^{\rm min}  \Vert\nabla_{\v{w}_0} \ell(\v{w}^{(i)})\Vert^2 +\\
& \sum_{i\in I_{\rm ts}}  \eta_i^{\rm min}  \Vert\nabla_{\v{w}_0} \ell(\v{w}^{(i)})\Vert^2 ~+ ~ \sum_{i\in I_0}  \eta_i^{\rm min}  \Vert\nabla_{\v{w}_{\rm ts}} \ell(\v{w}^{(i)})\Vert^2.
\end{split}
\end{equation}
\blue{Since $\eta_i^{\rm min} \le \eta_i$,} 
recalling \eqref{senza_G_i} we only need to show that the last two terms in \eqref{somme} are bounded.
Both terms can be treated in a very similar way, hence we will prove the result in detail only for the last one.
Moreover, for simplicity's sake, we consider the case $E_0=E_{\rm ts}=1$, where one cycle of ATE-SG consists of $t$ SG steps taken first with respect to the shared parameters $\v{w}_0$, and then with respect to the task-specific parameters $\v{w}_{\rm ts}$. This allows us to write down and exploit the following representations of the sets $I_0$ and $I_{\rm ts}$: 
\begin{eqnarray} \label{I0}
I_0= [\,0:t-1\,] \,\cup\, [\,2\,t:3\,t-1\,]\, \cup \,\cdots =
\displaystyle \bigcup_{e=0}^{\infty}\, [\,2\,e\,t:(2\,e+1)\,t-1\,] \quad \\
\label{Its}
\qquad  I_{\rm ts}= [\,t:2\,t-1\,] \,\cup\, [\,3\,t:4\,t-1\,]\, \cup \, \cdots =
\displaystyle \bigcup_{e=0}^{\infty}\, [\,(2\,e+1)\,t:2(e+1)t-1\,].
\end{eqnarray}
Nonetheless, we remark that with a bit more technicalities the proof can be extended to the general case $E_0\ge 1$ and $E_{\rm ts}\ge 1$. 

So, let us consider the last term in \eqref{somme}, which by using \eqref{I0} 
can be rewritten as follows:
\begin{equation} \begin{split} \label{sharing}
    \sum_{i\in I_0}  \eta_i^{\rm min}   \Vert\nabla_{\v{w}_{\rm ts}} \ell(\v{w}^{(i)})\Vert^2 
    & = \sum_{e=0}^{\infty} \sum_{\tau=0}^{t-1}  \eta_{2 e t+\tau}^{\rm min}   \Vert\nabla_{\v{w}_{\rm ts}} \ell(\v{w}^{(2 e t+\tau)})\Vert^2\\
    & = \sum_{\tau=0}^{t-1}  \eta_{\tau}^{\rm min}  \Vert\nabla_{\v{w}_{\rm ts}}   \ell(\v{w}^{(\tau)})\Vert^2 
    + \sum_{e=1}^{\infty} \sum_{\tau=0}^{t-1}  \eta_{2 e t+\tau}^{\rm min}   \Vert\nabla_{\v{w}_{\rm ts}} \ell(\v{w}^{(2 e t+\tau)})\Vert^2,
    \end{split}
\end{equation}
where, for any $e\ge 1$ and $\tau=0,1,\ldots,t-1$,
\begin{equation*} \begin{split}
\Vert\nabla_{\v{w}_{\rm ts}} \ell(\v{w}^{(2 e t+\tau)})\Vert & 
\le \Vert\nabla_{\v{w}_{\rm ts}} \ell(\v{w}^{(2 e t+\tau)})-\nabla_{\v{w}_{\rm ts}} \ell(\v{w}^{(2 e t -1)})\Vert +
\Vert\nabla_{\v{w}_{\rm ts}} \ell(\v{w}^{(2 e t -1)})\Vert\\
& \le L_{\rm ts}^0 \Vert \v{w}^{(2 e t+\tau)}-\v{w}^{(2 e t -1)}\Vert +
\Vert\nabla_{\v{w}_{\rm ts}} \ell(\v{w}^{(2 e t -1)})\Vert\\
& = L_{\rm ts}^0   \, \Vert  
\eta_{2et-1} \nabla_{\v{w}_{\rm{ts}}}
\ell(\mathcal{B}_{\blue{t-1}};\v{w}^{(2 e t-1)})    +
\sum_{k=0}^{\tau-1} \eta_{2et+k} \nabla_{\v{w}_{0}}
\ell(\mathcal{B}_k;\v{w}^{(2 e t+k)}) 
\, \Vert     \\
 & \quad + \Vert\nabla_{\v{w}_{\rm ts}} \ell(\v{w}^{(2 e t -1)})\Vert.
\end{split}
\end{equation*}
The idea of the proof is that, when damped by the learning rates,
gradient norms can be bounded by assumption \eqref{gradient_bound}.
Instead, to tackle the norm of 
$\nabla_{\v{w}_{\rm ts}} \ell(\v{w}^{(2 e t -1)})$ we can resort
to Lemma \ref{teo:multi_alt_grads}, because 
by \eqref{Its} it is easily seen that
$2et-1 \in I_{\rm ts}$ for any $e\ge 1$. Then
$$
~\Vert\nabla_{\v{w}_{\rm ts}} \ell(\v{w}^{(2 e t+\tau)})\Vert
\le L_{\rm ts}^0  M \sum_{k=-1}^{\tau-1} \eta_{2et+k} +
\Vert\nabla_{\v{w}_{\rm ts}} \ell(\v{w}^{(2 e t-1)})\Vert\,,  
$$ 
and
\begin{equation} \begin{split} \label{norma_q}
\Vert\nabla_{\v{w}_{\rm ts}} \ell(\v{w}^{(2 e t+\tau)})\Vert^2  
& \le (L_{\rm ts}^0 )^2 M^2 \,(\,\sum_{k=-1}^{\tau-1} \eta_{2et+k}\,)^2 +\\
 & \quad 2 L_{\rm ts}^0  M^2 \sum_{k=-1}^{\tau-1} \eta_{2et+k}\,  + 
 \Vert\nabla_{\v{w}_{\rm ts}} \ell(\v{w}^{(2 e t -1)})\Vert^2 .
\end{split} \end{equation}
Now using \eqref{gradient_bound} and \eqref{norma_q} in \eqref{sharing} we have
\begin{equation} \begin{split} \label{diminishing}
    \sum_{i\in I_0}  \eta_i^{\rm min}  \Vert\nabla_{\v{w}_{\rm ts}} \ell(\v{w}^{(i)})\Vert^2 & \le M^2 \sum_{\tau=0}^{t-1} \eta_{\tau}    +
    (L_{\rm ts}^0)^2 M^2 \sum_{e=1}^{\infty} \sum_{\tau=0}^{t-1}  \eta_{2 e t+\tau}^{\rm min} 
    (\,\sum_{k=-1}^{\tau-1} \eta_{2et+k}\,)^2  \\
 & + 2  L_{\rm ts}^0  M^2 \sum_{e=1}^{\infty} \sum_{\tau=0}^{t-1}  \eta_{2 e t+\tau}^{\rm min} 
    (\sum_{k=-1}^{\tau-1} \eta_{2et+k})  \\
  &  +\sum_{e=1}^{\infty} \sum_{\tau=0}^{t-1}  \eta_{2 e t+\tau}^{\rm min}  \Vert\nabla_{\v{w}_{\rm ts}} \ell(\v{w}^{(2 e t -1)})\Vert^2.
    \end{split}
\end{equation}
\blue{It is worth to point out here a technical fact that we exploit to conclude the proof: 
by \eqref{I0} and   \eqref{Its} it is easily seen that $\{2et, 2et+1,\ldots,2et+t-1\}\subset I_0$ 
 for all $e\ge 0$, and $\{2et-1, 2et+t, 2et+t+1,\ldots,2et+2t-1\}\subset I_{\rm ts}$ 
 for all $e\ge 1$. The corresponding sequences of learning rates are diminishing, so that 
 recalling definition \eqref{etamin_e} we have
\begin{equation}\label{etamin-ineq}
\eta_{2 e t+\tau}^{\rm min}= \min \{ \eta_{2 e t+t-1} , \eta_{2 e t+2t-1} \} <  
  \eta_{2 e t - 1},~~  \forall~e\ge 1 ~~{\rm and}~ 0 \le \tau \le t-1.
\end{equation}  
}
Hence the last term in \eqref{diminishing} can be bounded almost surely by exploiting
\eqref{senza_G_i} as follows: 
\begin{equation*} \begin{split}
\sum_{e=1}^{\infty} \sum_{\tau=0}^{t-1}  \eta_{2 e t+\tau}^{\rm min}  \Vert\nabla_{\v{w}_{\rm ts}} \ell(\v{w}^{(2 e t -1)})\Vert^2 
&~<~ 
t\sum_{e=1}^{\infty}  \eta_{2 e t-1} 
\Vert\nabla_{\v{w}_{\rm ts}} \ell(\v{w}^{(2 e t -1)})\Vert^2 \\ &~ <~
t  \sum_{i\in I_{\rm ts}}  \eta_i   
\Vert\nabla_{\v{w}_{\rm ts}} \ell(\v{w}^{(i)})\Vert^2  <\infty.
\end{split} \end{equation*}
The other terms can be bounded using 
\blue{the assumption 
 $\sum_{i=0}^{\infty} \eta_i^2 < \infty$. 
Indeed,  besides \eqref{etamin-ineq},
since $\{2et, 2et+1,\ldots,2et+t-1\}\subset I_0$ and the corresponding sequence of learning rate is diminishing, 
it  holds also that
$$\eta_{2 e t+\tau}^{\rm min}\le \eta_{2 e t + \tau}<\eta_{2 e t+k} 
\le \eta_{2 e t} , ~~{\rm  for~ all~~} e\ge 1,~~ 
1\le \tau \le t-1 ~~{\rm and~~}  0\le k \le \tau-1; $$ 
then we have
\begin{equation*} \begin{split}
 \sum_{e=1}^{\infty} \sum_{\tau=0}^{t-1}  \eta_{2 e t+\tau}^{\rm min}  \, 
 (\sum_{k=-1}^{\tau-1} \eta_{2 e t+k}\,) & = 
 \sum_{e=1}^{\infty}~ ( ~\sum_{\tau=0}^{t-1}  \eta_{2 e t+\tau}^{\rm min}  \, 
 \eta_{2 e t-1} + \sum_{\tau=1}^{t-1} \sum_{k=0}^{\tau-1} \eta_{2 e t+\tau}^{\rm min} \eta_{2 e t+k}\,) \\
 & <  \sum_{e=1}^{\infty} ~ ( ~\sum_{\tau=0}^{t-1}   \, 
 \eta_{2 e t-1}^2 + \sum_{\tau=1}^{t-1}\sum_{k=0}^{\tau-1} \eta_{2 e t}^2\,) 
    \\
 & < t\, \sum_{e=1}^{\infty}   \eta_{2 e t-1}^2 
 + (t-1)^2\,\sum_{e=1}^{\infty} \eta_{2 e t}^2 < \infty,
 \end{split} \end{equation*}
 
and
\begin{equation*}  
 \sum_{e=1}^{\infty} \sum_{\tau=0}^{t-1}  \eta_{2 e t+\tau}^{\rm min} \, 
 (\sum_{k=-1}^{\tau-1} \eta_{2 e t+k}\,)^2 
   <  (\sum_{k=-1}^{\tau-1} \eta_{2   t+k}\,) \,\sum_{e=1}^{\infty} \sum_{\tau=0}^{t-1}  \eta_{2 e t+\tau}^{\rm min} \, 
 (\sum_{k=-1}^{\tau-1} \eta_{2 e t+k}\,) 
 < \infty.
  \end{equation*}
  }
\end{proof}

 

\begin{remark} It is worth to observe that assumption \eqref{gradient_bound} 
in Theorem \ref{liminf} can be relaxed.
\blue{
Indeed, with a slight modification of the theorem's proof, we can prove that 
the last  term in \eqref{somme} is bounded assuming 
\begin{equation*}
 \sum_{e=1}^{\infty}    \eta_{2 e t-1}^2 M_{2et-1}^2<\infty\quad\quad
 {\rm and} \quad \quad
  \sum_{e=1}^{\infty}    \eta_{2 e t}^2 M_{2et-1}^2<\infty,
\end{equation*}
where
\begin{equation*}
M_{2 e t-1}=\max\{\Vert \nabla_{\v{w}_{\rm{ts}}}
\ell(\mathcal{B}_{t-1};\v{w}^{(2 e t-1)}) \Vert, \max_{k=0,1,\ldots,t-1} \Vert \nabla_{\v{w}_{0}}
\ell(\mathcal{B}_k;\v{w}^{(2 e t+k)})\Vert \}.
\end{equation*}
These conditions are clearly weaker than \eqref{gradient_bound},
and can be satisfied for example whenever $\{\eta_i\}=\{\frac{1}{i}\}$,
with $p\in (0,1/2)$.}
\end{remark}

\section{Numerical Experiments}\label{sec:num_exp}
In this section, we report the results of two numerical experiments. Both experiments study
and compare the performance of an MTNN trained with a standard stochastic gradient procedure and by
using alternate training procedures. The first experiment takes into account a synthetic dataset 
representing a 4-classes classification task and a binary classification task for a set of points 
in $\R^2$; 
the second experiment is based on a real-world dataset, where signals must be 
classified with respect to two different multiclass classification tasks.
In both cases, the used
loss function is a weighted sum of the task-specific losses where the weights are fixed and equal
to one (i.e., $\lambda_1=\lambda_2=1$, see \eqref{eq:total_loss}). Indeed, we want to analyze the 
effects of our method without any balancing of the task-specific losses or emphasizing the action
of a task with respect to the other.

The alternate procedure used for the experiments of this section is illustrated in \Cref{alg:practical_ATE} of \Cref{app:implATESG}: we call it \emph{implemented} ATE-SG method. 
The main difference between this implemented version and ATE-SG (see \Cref{alg:ATE}) lies in the way mini-batches are generated. Indeed, for theoretical purposes, in ATE-SG the mini-batches are randomly sampled from the training set $\mathcal{T}$, instead of being obtained through a random split of $\mathcal{T}$ as in \Cref{alg:practical_ATE}. This modification is necessary for optimal compatibility with the Deep Learning frameworks {\it Keras} \cite{keras2015} and {\it TensorFlow} \cite{tensorflow2015-whitepaper} used in the experiments.
\blue{Starting from a given initial value, the learning rates are adaptively reduced as specified in the training options below. }

\blue{
To compare the performance of the implemented ATE-SG procedures with the standard SG procedure, we
investigated their dependence both on the initial learning rate and on the number of alternate update epochs $E_0=E_{\rm ts}$. In addition, each test case was repeated several times using different random seeds. 
Specifically,
we trained the MTNN   for each combination of the following hyperparameters:
\begin{itemize}
    \item 11 distinct random seeds;
    \item Starting learning rates: $10^{-2}$, $10^{-3}$;
    \item Alternate training sub-epochs: $E_0=E_{\rm ts}=1,10,100$. 
\end{itemize}   
\noindent
Training options common to both experiments are the following:
\begin{itemize}
    \item Preprocessing: standard scaler for inputs (based on training data only);
    \item Optimization: maximum number of epochs $5000$;
  early stopping (patience 350, restore best weights); 
    reduce learning rate on plateaus (patience 50, factor 0.75, min-delta $0.0001$).
\end{itemize}
}

\blue{
For classification tasks,  typical performance measures of the methods are
 the weighted average precision, 
 the weighted average recall, 
and the weighted average $F_1$-score \cite{sklearn}.
For the reader's convenience, we report a brief description of the quantities: precision, recall, and $F_1$-score, for any class $C$ of a general classification problem (see \cite{Makhoul1999_precrecf1}). The precision is the percentage of correct predictions among all the elements predicted as $C$; the recall is the percentage of elements predicted as $C$ among all the $C$ elements in the set; the $F_1$-score is defined as the weighted harmonic mean of precision and recall. Then, the weighted average precision/recall/$F_1$-score is the weighted average of these quantities with respect to the cardinalities of each class in the set. For this reason, we have that the weighted average recall is equivalent to the accuracy.

\noindent
Performance measures reported in 
Tables \ref{tab:synth_performance_1e-2}-\ref{tab:synth_performance_1e-3} 
and Tables \ref{tab:rco_performance_1e-2}-\ref{tab:rco_performance_1e-3}
are averaged with respect to the 11 models trained with different random seeds. 
We also analyze the behavior of the  loss functions 
(on both  the training and the validation set) over the epochs and plot it in Figures \ref{fig:synth_losses_1e-2}-\ref{fig:synth_losses_1e-3} and Figures
\ref{fig:rco_losses_1e-2}-\ref{fig:rco_losses_1e-3}: in the figures,
colored areas represent the values between the minimum and the maximum loss obtained 
after 11 realizations; colored lines are the median values.

}


\subsection{Experiments on Synthetic Data}\label{sec:exp_synthetic}

We consider a dataset $\mathcal{D}$ made of $N=10\, 000$ points uniformly sampled in the square $D=[-2, 2]^2\subset\R^2$ and labeled with respect to two different criteria: $1$) to belong to one of the four quadrants of $\R^2$ (4-classes classification task); $2$) to be inside or outside the unitary circle centered in the origin (binary classification task). For simplicity, we denote these tasks by \emph{task 1} and \emph{task 2}, respectively.

Then, the dataset $\mathcal{D}$ is made up of samples $(\v{x}_i, (q_i, c_i))$ such that $\v{x}_i\in D=[-2, 2]^2$, $q_i \in \{0, \ldots ,3\}$, and $c_i\in\{0, 1\}$, for each $i=1,\ldots , N$ (see \Cref{fig:sinth_data}), where $q_i$ and $c_i$ denote the quadrant label and the circle label of $\v{x}_i$, respectively.

\begin{figure}[htb!]
    \centering
    \includegraphics[width=0.65\textwidth]{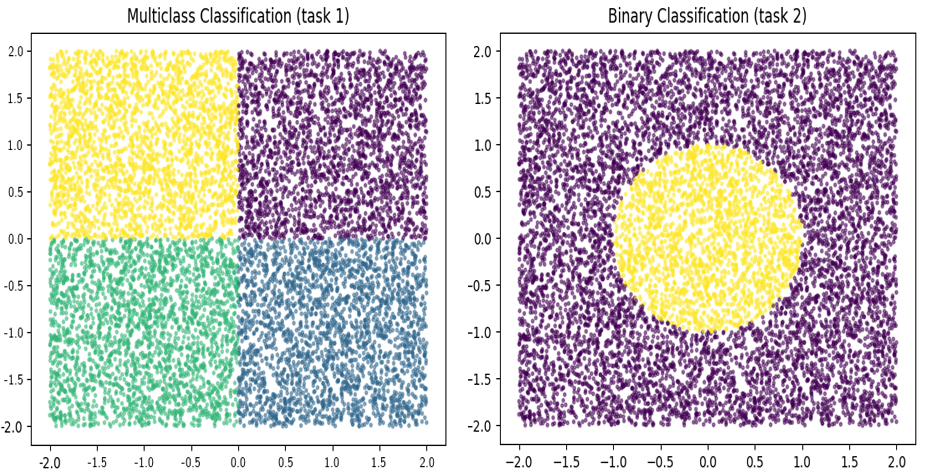}
    \caption{The synthetic dataset $\mathcal{D}$ with respect to its two tasks. Task 1 on the left, task 2 on the right. Different colors denote different labels for the points.
    }
    \label{fig:sinth_data}
\end{figure}

We randomly split $\mathcal{D}$, so that the training set $\mathcal{T}$, the validation set $\mathcal{V}$, and the test set $\mathcal{P}$ are made up of $T=5\,600$ samples, $V=1\,400$ samples, and $P=3\,000$ samples, respectively. Then, we build an MTNN with architecture as described in \Cref{tab:mtnn_arch_synth} and, 
both for the classic SG and the implemented ATE-SG procedures, we train it 
\blue{ by setting the mini-batch size to 256 and minimizing
the sum of the following task-specific loss functions:
\begin{itemize}
    \item task 1: categorical cross-entropy;
    \item task 2: binary cross-entropy (from logits). 
\end{itemize}}


\begin{table}[htb!]
    \centering
    \resizebox{1.\textwidth}{!}{
    \begin{tabular}{|c|c||c|c|c|c|}
        \hline
        Layer Name & Layer Type (Keras) & Output Shape & Param. \# & Param. \# (Grouped) & Connected to\\
        \hline
        \hline
        input & InputLayer & (None, 2) & $0$ & - & - \\
        \hline
        trunk\_01 & Dense (relu) & (None, 512) & $1\,536$ & ($\v{w}_0$-param.s) & input\\
        trunk\_02 & Dense (relu) & (None, 512) & $262\,656$ & $526\,848$ & trunk\_01\\
        trunk\_03 & Dense (relu) & (None, 512) & $262\,656$ & \quad & trunk\_02\\
        \hline
        quad\_01 & Dense (relu) & (None, 512) & $262\,656$ & ($\v{w}_{\rm ts}$-param.s, task 1) & trunk\_03 \\
        quad\_02 & Dense (relu) & (None, 512) & $262\,656$ & $527\,364$ & quad\_01 \\
        quad\_out & Dense (softmax) & (None, 4) & $2\,052$ & \quad & quad\_02\\
        \hline
        circ\_01 & Dense (relu) & (None, 512) & $262\,656$ & ($\v{w}_{\rm ts}$-param.s, task 2) & trunk\_03\\
        circ\_02 & Dense (relu) & (None, 512) & $262\,656$ & $525\,825$ & circ\_01 \\
        circ\_out & Dense (linear)& (None, 1) & $513$ & \quad & circ\_02\\
        \hline
    \end{tabular}
    }
    \caption{Synthetic dataset cases. Keras \cite{keras2015,tensorflow2015-whitepaper} architecture of the MTNN.}
    \label{tab:mtnn_arch_synth}
\end{table}

\blue{After training the model with respect to all the configurations,
the first thing we observe is that the performance measured in terms of recall, precision, and $F_1$-score is almost identical for all the training methods and all the starting learning rate values. See Tables \ref{tab:synth_performance_1e-2}-\ref{tab:synth_performance_1e-3} for the average performance measures of the MTNN on the test set.
}

\begin{table}[htb!]
    \centering
    \resizebox{1.\textwidth}{!}{
    \begin{tabular}{|c||c|c||c|c||c|c|}
        \hline
        \quad & \multicolumn{2}{c||}{Recall (Accuracy)} & \multicolumn{2}{c||}{Precision} & \multicolumn{2}{c|}{$F_1$-score} \\
        \hline
        Training Type & task1 & task 2 & task1 & task 2 & task1 & task 2 \\
        \hline 
        \hline
        ATE-SG ($E_0=E_{\rm ts}=1$) & 0.997909 & 0.996667 & 0.997914 & 0.996677 & 0.997909 & 0.996665\\
        \hline
        ATE-SG ($E_0=E_{\rm ts}=10$) & 0.997727 & 0.996758 & 0.997730 & 0.996766 & 0.997727 & 0.996758 \\
        \hline
        ATE-SG ($E_0=E_{\rm ts}=100$) & 0.997848 & 0.996576 & 0.997851 & 0.996587 & 0.997848 & 0.996575\\
        \hline
        \hline
        Classic SG & 0.997788 & 0.996697 & 0.997791 & 0.996703 & 0.997788 & 0.996694 \\
        \hline
    \end{tabular}
    }
    \caption{\blue{Synthetic dataset. Starting learning rate $10^{-2}$. Performance measures of the MTNNs on the test set $\mathcal{P}$. 
    }}
    \label{tab:synth_performance_1e-2}
\end{table}

\begin{table}[htb!]
    \centering
    \resizebox{1.\textwidth}{!}{
    \begin{tabular}{|c||c|c||c|c||c|c|}
        \hline
        \quad & \multicolumn{2}{c||}{Recall (Accuracy)} & \multicolumn{2}{c||}{Precision} & \multicolumn{2}{c|}{$F_1$-score} \\
        \hline
        Training Type & task1 & task 2 & task1 & task 2 & task1 & task 2 \\
        \hline
        \hline
        ATE-SG ($E_0=E_{\rm ts}=1$) & 0.998061 & 0.996606 & 0.998063 & 0.996614 & 0.998060 & 0.996605\\
        \hline
        ATE-SG ($E_0=E_{\rm ts}=10$) & 0.998030 & 0.996606 & 0.998033 & 0.996616 & 0.998030 & 0.996606 \\
        \hline
        ATE-SG ($E_0=E_{\rm ts}=100$) & 0.998333 & 0.996818 & 0.998335 & 0.996826 & 0.998333 & 0.996817\\
        \hline
        \hline
        Classic SG & 0.998242 & 0.996758 & 0.998244 & 0.996767 & 0.998242 & 0.996757 \\
        \hline
    \end{tabular}
    }
    \caption{\blue{Synthetic dataset. Starting learning rate $10^{-3}$. Performance measures of the MTNNs on the test set $\mathcal{P}$. 
    }}
    \label{tab:synth_performance_1e-3}
\end{table}

\blue{
 We also analyze the behavior of the loss functions through the epochs:
 in Figures \ref{fig:synth_losses_1e-2} and \ref{fig:synth_losses_1e-3}, we plot the values of training and validation loss versus epochs, for $\eta=10^{-2}$ and $\eta=10^{-3}$ respectively.
 As already noticed, 
 at an equal number of epochs
 ATE-SG is less expensive and requires less memory than SG, because SG computes and stores the full gradient $\nabla_{\v{w}}\ell$, while ATE-SG computes and stores either $\nabla_{\v{w}_{0}}\ell$ or $\nabla_{\v{w}_{\rm ts}}\ell$. 
 In our test,  the dimension of $\nabla_{\v{w}_0}\ell$ is approximately one-third of 
$\nabla_{\v{w}}\ell$'s dimension (see \Cref{tab:mtnn_arch_synth}). Then, when computing $\nabla_{\v{w}_{\rm ts}}\ell$  in the task-specific phase,  ATE-SG saves one-third of memory allocation and one-third of back-propagation operations, because 
  the back-propagation \cite{Rumelhart1986_BACKPROP_Nature} stops before propagating through the trunk $\widehat{\v{F}}_0$ of the MTNN (see \Cref{def:MTNN}). 
In the shared phase, two-thirds of memory allocation is saved.

Given this premise, we observe what follows.

\begin{itemize}

    \item For the case $E_0=E_{\rm ts}=1$, ATE-SG appears to have regularization effects on the training, leading to a marked reduction in its oscillations with respect to SG as well as to larger values of $E_0$ and $E_{\rm ts}$, both in the training set and in the validation set. 

    

    
    \item For the case $E_0=E_{\rm ts}=10$, we observe a particular phenomenon for ATE-SG: the training
    yields oscillations for the loss function which show spikes approximately every $10$ epochs 
    and are characterized by a range of values larger than the typical fluctuations of the loss of a classic MTNN training. 
    Nonetheless, the overall trend of the loss on the validation set is still decreasing with the epochs (see Figures \ref{fig:synth_losses_1e-2}-(b) and \ref{fig:synth_losses_1e-3}-(b)). 
    Analogously to the other cases, these oscillations tend to reduce when decreasing the learning rate (e.g., compare Figures \ref{fig:synth_losses_1e-2}-(b) and \ref{fig:synth_losses_1e-3}-(b)).

    \item The larger the value chosen for $E_0=E_{\rm ts}$, the more similar the loss behavior of ATE-SG is to the loss of SG. In particular, with $E_0=E_{\rm ts}=100$ the loss behaviors 
        (on both training and validation set) are very similar, both in the range of values and in the stochastic oscillations 
    (see Figures  \ref{fig:synth_losses_1e-2} and \ref{fig:synth_losses_1e-3}). 
    In general, these oscillations tend to reduce with smaller learning rates due to the shorter steps in the domain.
    
    \item In Figure \ref{fig:synth_lr_decay} we plot the used learning rate against the epochs for each seed and for each method, for the starting learning rate $10^{-3}$. We can observe that SG and ATE-SG with $E_0=E_{\rm ts}=100$ shows a similar behaviour, while the initial learning rate is used for a larger number of epochs in  ATE-SG with $E_0=E_{\rm ts}=1,10$, resulting in overall larger sequences of learning rates. We observed a similar behavior also with the starting learning rate equal to $10^{-2}$. 
\end{itemize}
}

\begin{figure}[htb!]
    \centering
    \subcaptionbox{Training loss, starting learning rate $10^{-2}$.}{
    \includegraphics[trim={1.75cm 0.25cm 2.45cm 1.8cm},clip,width=.65\textwidth]{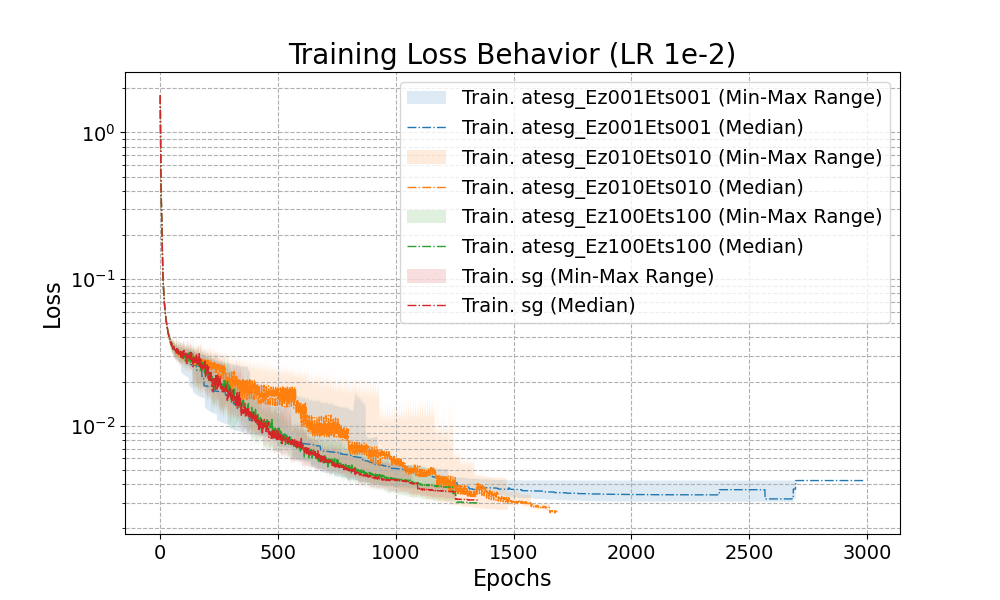}
    }
    \subcaptionbox{Validation loss, starting learning rate $10^{-2}$.}{
    \includegraphics[trim={1.75cm 0.25cm 2.45cm 1.75cm},clip,width=.65\textwidth]{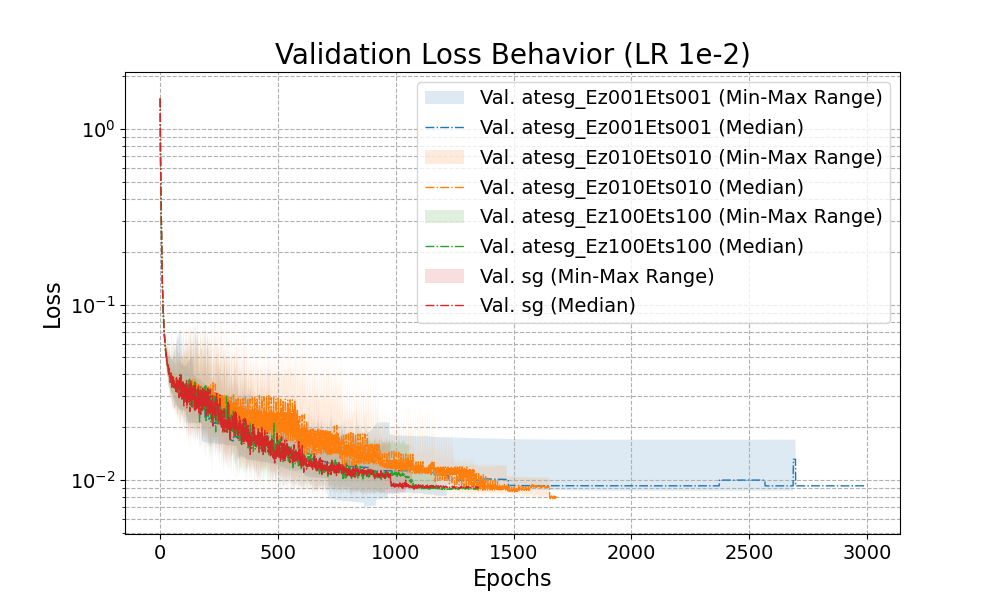}
    }
    \caption{\blue{Synthetic dataset case.} 
    }
    \label{fig:synth_losses_1e-2}
\end{figure}

\begin{figure}[htb!]
    \centering
    \subcaptionbox{Training loss, starting learning rate $10^{-3}$.}{
    \includegraphics[trim={1.75cm 0.25cm 2.45cm 1.79cm},clip,width=.65\textwidth]{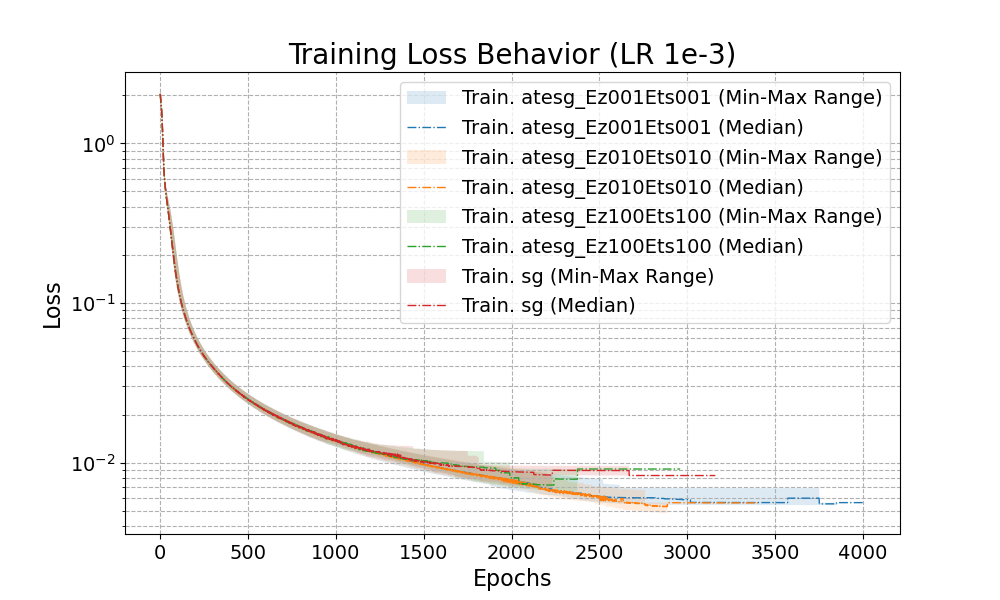}
    }
    \subcaptionbox{Validation loss, starting learning rate $10^{-3}$.}{
    \includegraphics[trim={1.75cm 0.25cm 2.45cm 1.75cm},clip,width=.65\textwidth]{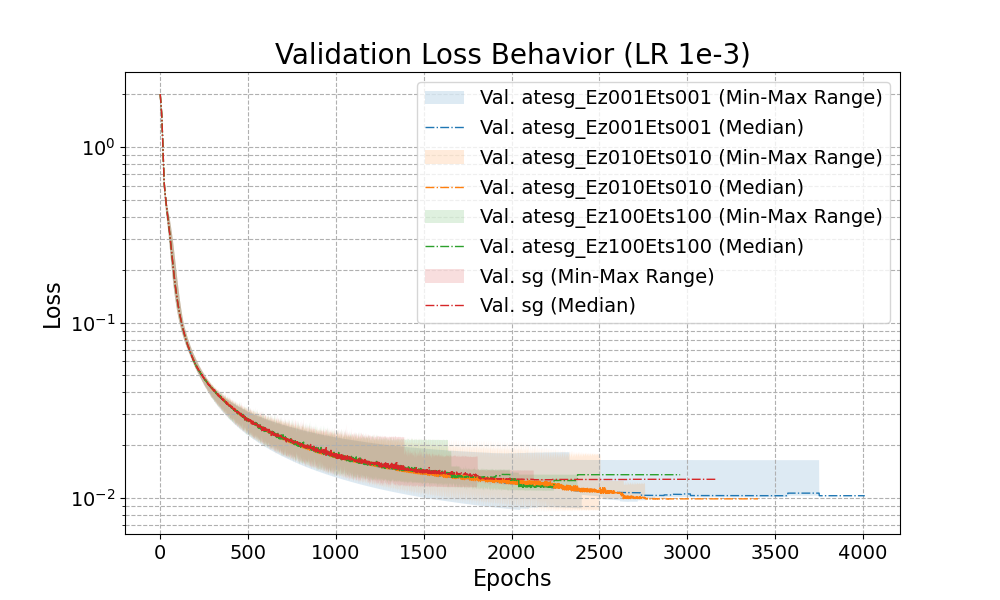}
    }
    \caption{\blue{Synthetic dataset case.} 
    }
    \label{fig:synth_losses_1e-3}
\end{figure}


\begin{figure}[htb!]
    \centering
    \includegraphics[trim={1.75cm 0.25cm 2.45cm 1.cm},clip,width=.75\textwidth]{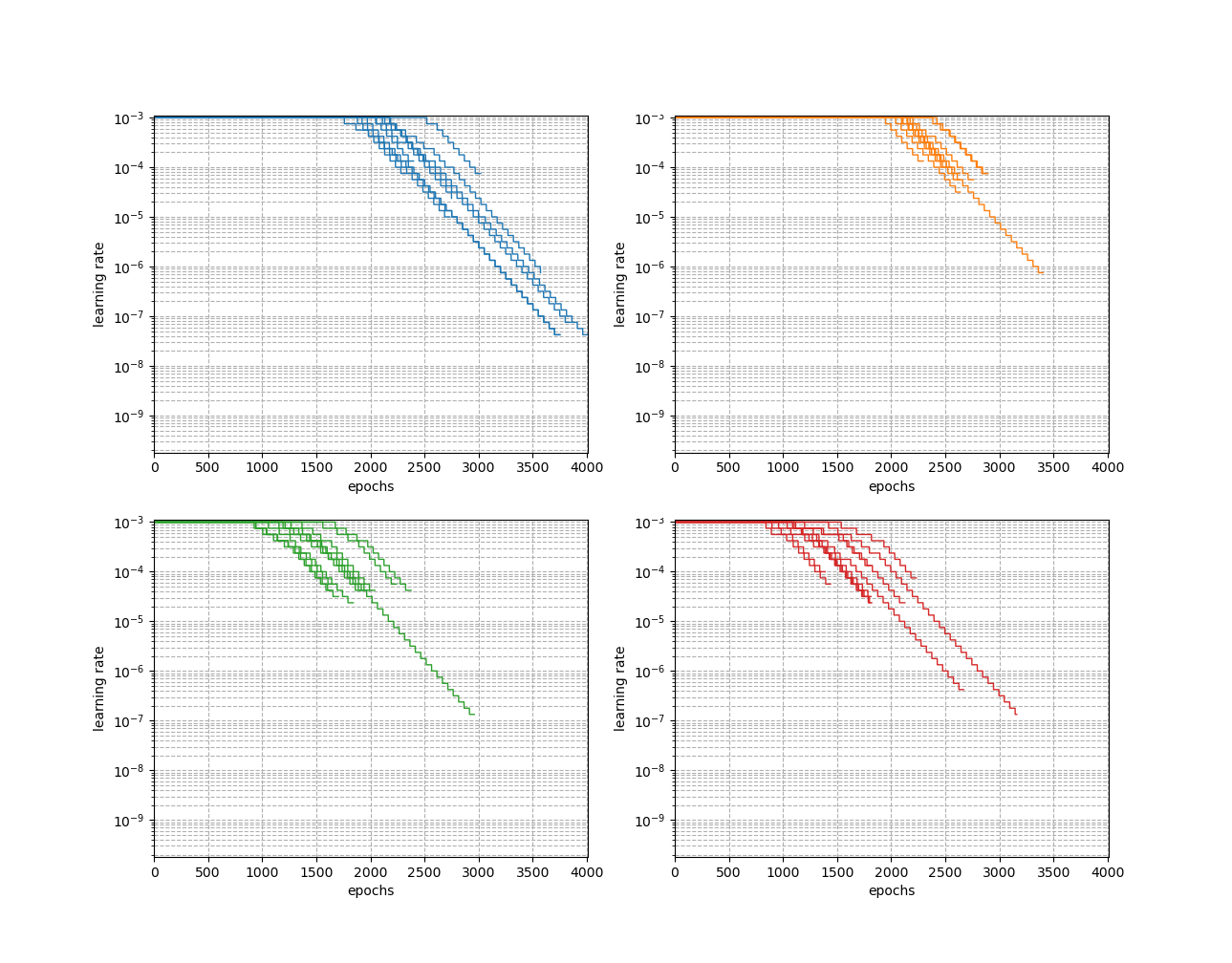}
    \caption{
    \blue{Synthetic dataset case. Learning rate sequence, starting value: $10^{-3}$. From left to right, from top to bottom: ATE-SG $E_0=E_{\rm ts}=1$, ATE-SG $E_0=E_{\rm ts}=10$, ATE-SG $E_0=E_{\rm ts}=100$, and SG.}
    }
    \label{fig:synth_lr_decay}
\end{figure}

\subsection{Experiments on Real-World Data}\label{sec:exp_signals}

For the experiment on real-world data, we report the results obtained on a dataset used for wireless signal recognition tasks \cite{Jagannath2021_dataset}. Typically, signal recognition is segmented into sub-tasks like the modulation recognition or the wireless technology (i.e., signal type); nonetheless, in \cite{Jagannath2021_MTL, Jagannath2021_arXiv} the authors suggest an approach to the problem that exploits a multi-task setting for classifying at the same time both the modulation and the signal type of a wireless signal.

Here we focus on signals characterized by $0{\rm dB}$ Signal-to-Noise Ratio (SNR). The dataset $\mathcal{D}$ is made of $N=63\, 000$ signals, each one represented as a vector $\v{s}_i\in\R^{256}$ obtained from 128 complex samples of the original signal. In the dataset, associated to each signal $\v{s}_i$, we have a label $\mu_i\in\{0,\ldots ,5\}$ for the corresponding modulation (\emph{task 1}) and a label $\sigma_i\in\{0,\ldots ,7\}$ for the corresponding signal type (\emph{task 2}). For more details about the data see \cite{Jagannath2021_dataset, Jagannath2021_MTL,Jagannath2021_arXiv}.

We split $\mathcal{D}$ randomly, so that the training set $\mathcal{T}$, the validation set $\mathcal{V}$, and the test set $\mathcal{P}$ are made of $T=35\,280$ samples, $V=8\,820$ samples, and $P=18\,900$ samples, respectively (i.e., same ratios used for the synthetic data in \Cref{sec:exp_synthetic}). Then, we build an MTNN with architecture as described in \Cref{tab:mtnn_arch_realworld} and, both for the classic SG and the implemented ATE-SG procedures, we train it \blue{by setting the mini-batch size to 512 and minimizing the sum of the task-specific loss functions (categorical cross-entropy for both tasks).}
\blue{In particular, given this architecture (see \Cref{tab:mtnn_arch_realworld} and \Cref{rem:mtnn_arch_signals} below) when computing $\nabla_{\v{w}_{\rm ts}}\ell$ in the task-specific phase, ATE-SG saves one-half of memory allocation and one-half of back-propagation operations; on the other hand, in the shared phase, one-half of memory allocation is saved.}


\begin{remark}[MTNN architecture and hyper-parameters]\label{rem:mtnn_arch_signals}
    Since the aim of the experiment is to study how the training performance of an MTNN change when using an alternate training procedure, we do not focus on hyper-parameter and/or architecture tuning for obtaining the best predictions. Then, for simplicity, in this experiment we choose a simple but efficient architecture (see \Cref{tab:mtnn_arch_realworld}) based on 1-dimensional convolutional layers, after a brief, manual hyper-parameter tuning. The 1-dimensional convolutional layers are useful to exploit the signals reshaped as 128 complex signals (see layer \emph{trunk\_00} in \Cref{tab:mtnn_arch_realworld}), keeping the NN relatively small.
\end{remark}

\begin{table}[htb!]
    \centering
    \resizebox{1.\textwidth}{!}{
    \begin{tabular}{|c|c||c|c|c|c|}
        \hline
        Layer Name & Layer Type (Keras) & Output Shape & Param. \# & Param. \# (Grouped) & Connected to\\
        \hline
        \hline
        input & InputLayer & (None, 256) & $0$ & - & - \\
        \hline
        trunk\_00 & Reshape & (None, 128, 2) & $0$ & \quad & input\\
        trunk\_01 & Conv1D (relu) & (None, 128, 64) & $576$ & ($\v{w}_0$-param.s) & trunk\_00\\
        trunk\_02 & Conv1D (relu) & (None, 128, 32) & $8\,224$ & $12\,928$ & trunk\_01\\
        trunk\_03 & Conv1D (relu) & (None, 128, 32) & $4\,128$ & \quad & trunk\_02\\
        trunk\_end & GlobalMaxPooling1D & (None, 32) & $0$ & \quad & trunk\_03\\
        \hline
        mod\_01 & Dense (relu) & (None, 64) & $2\,112$ & ($\v{w}_{\rm ts}$-param.s, task 1) & trunk\_end \\
        mod\_02 & Dense (relu) & (None, 64) & $4\,160$ & $6\,662$ & mod\_01 \\
        mod\_out & Dense (softmax) & (None, 6) & $390$ & \quad & mod\_02\\
        \hline
        sig\_01 & Dense (relu) & (None, 64) & $2\,112$ & ($\v{w}_{\rm ts}$-param.s, task 2) & trunk\_end\\
        sig\_02 & Dense (relu) & (None, 64) & $4\,160$ & $6\,792$ & sig\_01 \\
        sig\_out & Dense (softmax) & (None, 8) & $520$ & \quad & sig\_02\\
        \hline
    \end{tabular}
    }
    \caption{Real-world dataset. Keras \cite{keras2015,tensorflow2015-whitepaper} architecture of the MTNN.}
    \label{tab:mtnn_arch_realworld}
\end{table}

\blue{

Looking at Tables \ref{tab:rco_performance_1e-2}-\ref{tab:rco_performance_1e-3}, all the trained models show approximately the same performance, but 
from Figures \ref{fig:rco_losses_1e-2}-\ref{fig:rco_losses_1e-3} we can see  that experiments on this test case are more challenging than the previous ones. 
Although achieving good accuracy, with the initial learning rate $10^{-2}$ overfitting is reached; starting with learning rate $10^{-3}$ overfitting is no longer a problem, rather a plateau appears to be reached, and the training procedures have difficulties in further decreasing the validation loss.

Further, some general trends are confirmed:
\begin{itemize}
\item the choice $E_0=E_{\rm ts}=1$ yields a regularization effect on the training; 
\item oscillations tend to reduce with smaller learning rates;
\item for larger values of $E_0$ and $E_{\rm ts}$   the loss behavior of ATE-SG is closer to that of SG.
\end{itemize}

    

}

\begin{table}[htb!]
    \centering
    \resizebox{1.\textwidth}{!}{
    \begin{tabular}{|c||c|c||c|c||c|c|}
        \hline
        \quad & \multicolumn{2}{c||}{Recall (Accuracy)} & \multicolumn{2}{c||}{Precision} & \multicolumn{2}{c|}{$F_1$-score} \\
        \hline
        Training Type & task1 & task 2 & task1 & task 2 & task1 & task 2 \\
        \hline
        \hline
        ATE-SG ($E_0=E_{\rm ts}=1$) & 0.916840 & 0.961592 & 0.921696 & 0.962326 & 0.917194 & 0.960846\\
        \hline
        ATE-SG ($E_0=E_{\rm ts}=10$) & 0.922131 & 0.964031 & 0.923422 & 0.964155 & 0.922331 & 0.963479 \\
        \hline
        ATE-SG ($E_0=E_{\rm ts}=100$) & 0.923906 & 0.965613 & 0.925943 & 0.965629 & 0.924248 & 0.965063\\
        \hline
        \hline
        Classic SG & 0.926763 & 0.967181 & 0.927541 & 0.967079 & 0.926777 & 0.966837 \\
        \hline
    \end{tabular}
    }
    \caption{\blue{Real-world dataset. Starting learning rate $10^{-2}$. Performance measures of the MTNNs on the test set $\mathcal{P}$. 
    }}
    \label{tab:rco_performance_1e-2}
\end{table}

\begin{table}[htb!]
    \centering
    \resizebox{1.\textwidth}{!}{
    \begin{tabular}{|c||c|c||c|c||c|c|}
        \hline
        \quad & \multicolumn{2}{c||}{Recall (Accuracy)} & \multicolumn{2}{c||}{Precision} & \multicolumn{2}{c|}{$F_1$-score} \\
        \hline
        Training Type & task1 & task 2 & task1 & task 2 & task1 & task 2 \\
        \hline
        \hline
        ATE-SG ($E_0=E_{\rm ts}=1$) & 0.921241 & 0.964786 & 0.921954 & 0.964736 & 0.921131 & 0.964292\\
        \hline
        ATE-SG ($E_0=E_{\rm ts}=10$) & 0.924411 & 0.965830 & 0.924818 & 0.965756 & 0.924168 & 0.965472 \\
        \hline
        ATE-SG ($E_0=E_{\rm ts}=100$) & 0.923430 & 0.965166 & 0.924457 & 0.965090 & 0.923388 & 0.964671\\
        \hline
        \hline
        Classic SG & 0.923516 & 0.965527 & 0.923931 & 0.965450 & 0.923406 & 0.965076 \\
        \hline
    \end{tabular}
    }
    \caption{\blue{Real-world dataset. Starting learning rate $10^{-3}$. Performance measures of the MTNNs on the test set $\mathcal{P}$. 
    }}
    \label{tab:rco_performance_1e-3}
\end{table}

\quad



\begin{figure}[htb!]
    \centering
    \subcaptionbox{Training loss, starting learning rate $10^{-2}$.}{
    \includegraphics[trim={1.cm 0.25cm 2.45cm 1.78cm},clip,width=.65\textwidth]{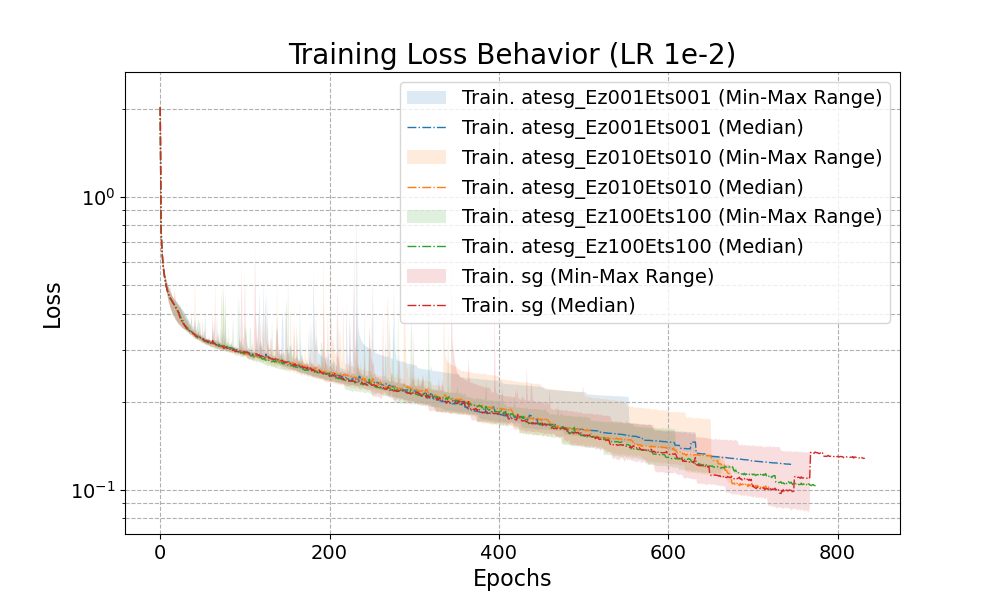}
    }
    \subcaptionbox{Validation loss, starting learning rate $10^{-2}$.}{
    \includegraphics[trim={1.cm 0.25cm 2.45cm 1.75cm},clip,width=.65\textwidth]{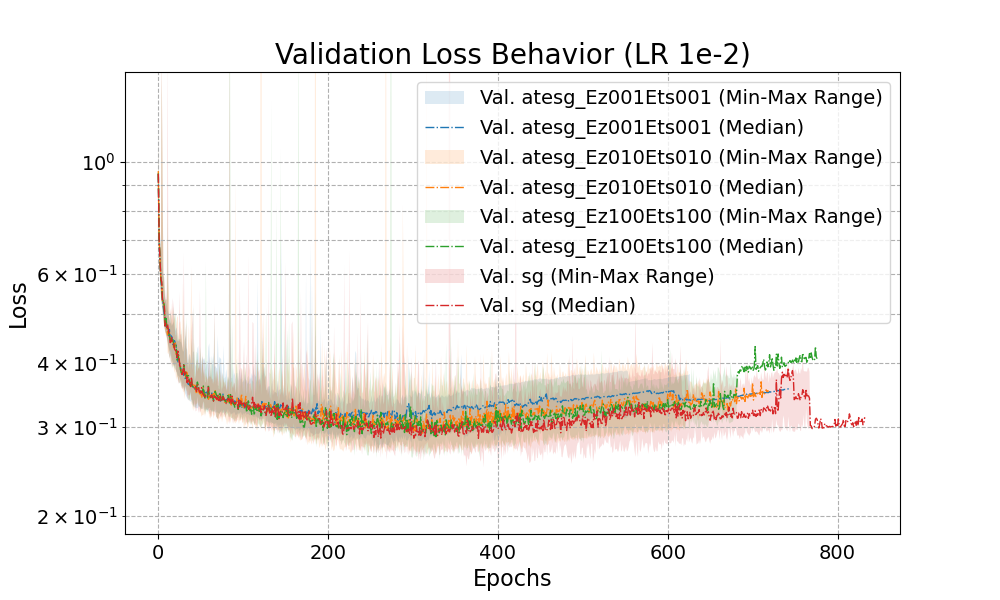}
    }
    \caption{\blue{Real-world dataset case.} 
    }
    \label{fig:rco_losses_1e-2}
\end{figure}

\begin{figure}[htb!]
    \centering
    \subcaptionbox{Training loss, starting learning rate $10^{-3}$.}{
    \includegraphics[trim={1.85cm 0.25cm 2.45cm 1.775cm},clip,width=.65\textwidth]{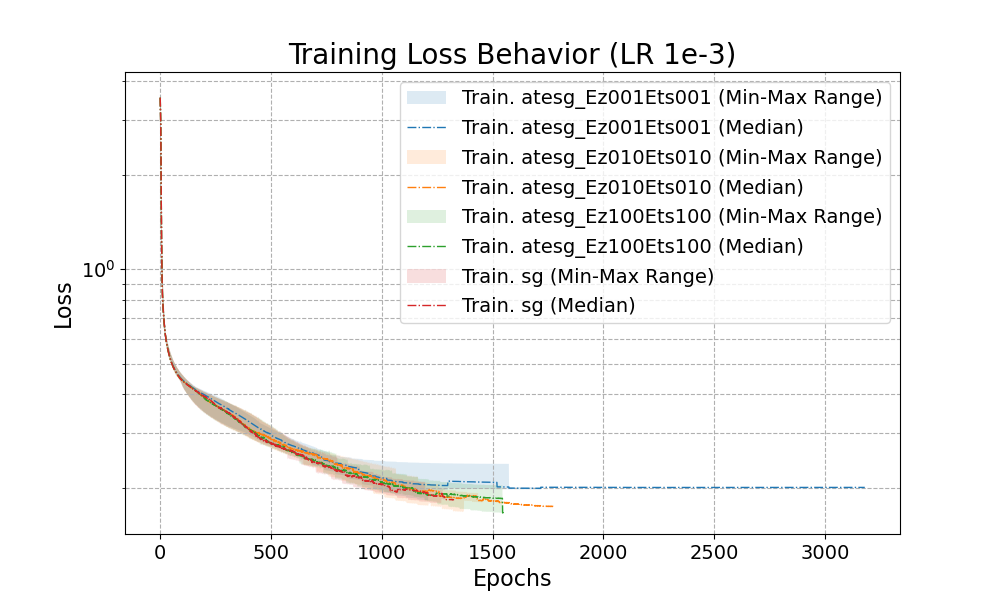}
    }
    \subcaptionbox{Validation loss, starting learning rate $10^{-3}$.}{
    \includegraphics[trim={1.85cm 0.25cm 2.45cm 1.75cm},clip,width=.65\textwidth]{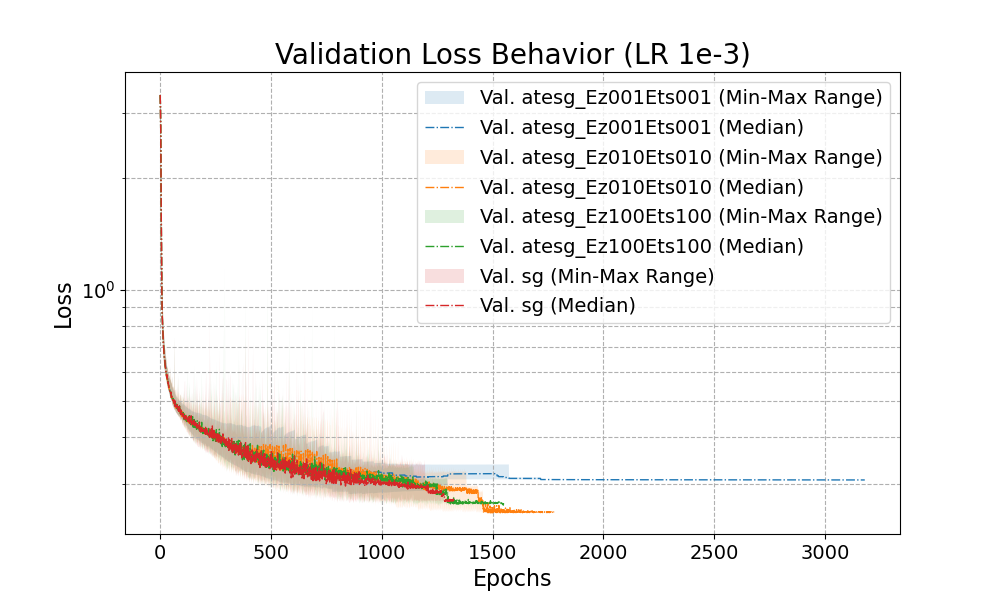}
    }
    \caption{\blue{Real-world dataset case.} 
    }
    \label{fig:rco_losses_1e-3}
\end{figure}

\section{Conclusion}\label{sec:conclusion}

In this work we presented new alternate training procedures for hard-parameter sharing MTNNs. We started illustrating the properties of the task-specific gradients of the loss function of an MTNN, and explaining the motivations behind the proposed alternate method. Then, in \Cref{sec:alterante_training}, we introduced a first formulation of alternate training called Simple Alternate Training (SAT) and a second one called Alternate Training through the Epochs (ATE); both formulations are based on the Stochastic Gradient (SG). For these methods we provided a stochastic convergence analysis.
We concluded the work illustrating the results of two numerical experiments, where the prediction abilities of an MTNN trained using the implemented ATE-SG algorithm are compared with the prediction abilities of the same MTNN trained using the SG. The experiments show very interesting properties of training the network using the implemented ATE-SG; \blue{in particular, in the training phase, we observe a reduction of the computational costs and 
regularization effects on the training (i.e., marked reduction in loss value oscillations), when high-frequency alternation between the shared parameters training phase and the task-specific parameters is adopted.}

In conclusion, the alternate training procedures presented in this work proved to be a novel and useful approach for training MTNNs. In future work, we will analyze how the procedures may change by replacing the stochastic gradient with other optimization methods.
\blue{
Additionally, we will focus on studying the behavior of ATE-SG varying adaptively  the number of epochs in the shared and task-specific  phases along the iterations, taking also into account the learning rate schedule.}




\section*{Acknowledgement(s)}

S.B. acknowledges the support from the Italian PRIN project ``\emph{Numerical Optimization with Adaptive accuracy and applications to machine learning}'', CUP E\-53\-D\-2300\-7690\-006, \emph{Progetti di Ricerca di Interesse nazionale 2022}.
F.D. acknowledges that this study was carried out within the FAIR-Future Artificial Intelligence Research and received funding from the European Union Next-GenerationEU (PIANO NAZIONALE DI RIPRESA E RESILIENZA (PNRR) – MISSIONE 4 COMPONENTE 2, INVESTIMENTO 1.3---D.D. 1555 11/10/2022, PE00000013). This manuscript reflects only the authors’ views and opinions; neither the European Union nor the European Commission can be considered responsible for them. F.D. acknowledges support from Italian MUR PRIN project 20227K44ME, Full and Reduced order modeling of coupled systems: focus on non-matching methods and automatic learning (FaReX).
The authors acknowledge support from INdAM-GNCS Group.

\section*{Disclosure statement}

Authors declare no conflicts of interest







\section*{Notes}

 For a ready-to-use version of the implemented ATE-SG algorithm (see \Cref{alg:practical_ATE}, \Cref{app:implATESG}), visit \url{https://github.com/Fra0013To/ATEforMTNN}.


\bibliographystyle{tfs}
\bibliography{references, aiPapers, DellaPapers, optimizationPapers, otherPapers}

\begin{thebibliography}{10}
\providecommand{\MR}{\relax\unskip\space MR }
\providecommand{\url}[1]{\normalfont{#1}}
\providecommand{\urlprefix}{Available at }

\bibitem{sklearn}
\emph{Scikit-learn}, \url{https://scikit-learn.org}. Accessed on: 2023-09-09.

\bibitem{tensorflow2015-whitepaper}
M. Abadi, A. Agarwal, P. Barham, E. Brevdo, Z. Chen, C. Citro, G.S. Corrado, A. Davis, J. Dean, M. Devin, S. Ghemawat, I. Goodfellow, A. Harp, G. Irving, M. Isard, Y. Jia, R. Jozefowicz, L. Kaiser, M. Kudlur, J. Levenberg, D. Man\'{e}, R. Monga, S. Moore, D. Murray, C. Olah, M. Schuster, J. Shlens, B. Steiner, I. Sutskever, K. Talwar, P. Tucker, V. Vanhoucke, V. Vasudevan, F. Vi\'{e}gas, O. Vinyals, P. Warden, M. Wattenberg, M. Wicke, Y. Yu, and X. Zheng, \emph{{TensorFlow}: Large-scale machine learning on heterogeneous systems} (2015). \urlprefix\url{https://www.tensorflow.org/}, Software available from tensorflow.org.

\bibitem{Bottou_Nocedal}
L. Bottou, F.E. Curtis, and J. Nocedal, \emph{Optimization methods for large-scale machine learning}, SIAM Review 60 (2018), pp. 223--311. \urlprefix\url{https://doi.org/10.1137/16M1080173}.

\bibitem{Bragman2019}
F. Bragman, R. Tanno, S. Ourselin, D. Alexander, and J. Cardoso, \emph{Stochastic Filter Groups for Multi-Task CNNs: Learning Specialist and Generalist Convolution Kernels}, in \emph{2019 IEEE/CVF International Conference on Computer Vision (ICCV)}. 2019, pp. 1385--1394.

\bibitem{GradNorm}
Z. Chen, V. Badrinarayanan, C.Y. Lee, and A. Rabinovich, \emph{{G}rad{N}orm: Gradient Normalization for Adaptive Loss Balancing in Deep Multitask Networks}, in \emph{Proceedings of the 35th International Conference on Machine Learning}, J. Dy and A. Krause, eds., Proceedings of Machine Learning Research Vol.~80, 10--15 Jul. PMLR, 2018, pp. 794--803. \urlprefix\url{https://proceedings.mlr.press/v80/chen18a.html}.

\bibitem{keras2015}
F. Chollet, \emph{et~al.}, \emph{Keras}, \url{https://keras.io} (2015).

\bibitem{MTL_UncWeightLoss}
R. Cipolla, Y. Gal, and A. Kendall, \emph{Multi-task Learning Using Uncertainty to Weigh Losses for Scene Geometry and Semantics}, in \emph{2018 IEEE/CVF Conference on Computer Vision and Pattern Recognition}. 2018, pp. 7482--7491.

\bibitem{MTL_DynamicTaskPrior}
M. Guo, A. Haque, D.A. Huang, S. Yeung, and L. Fei-Fei, \emph{Dynamic Task Prioritization for Multitask Learning}, in \emph{Computer Vision -- ECCV 2018}, V. Ferrari, M. Hebert, C. Sminchisescu, and Y. Weiss, eds., Cham. Springer International Publishing, 2018, pp. 282--299.

\bibitem{Jagannath2021_dataset}
A. Jagannath and J. Jagannath, \emph{{Dataset for modulation classification and signal type classification for multi-task and single task learning}}, Computer Networks 199 (2021), p. 108441. \urlprefix\url{https://doi.org/10.1016/j.comnet.2021.108441}.

\bibitem{Jagannath2021_MTL}
A. Jagannath and J. Jagannath, \emph{{Multi-task Learning Approach for Automatic Modulation and Wireless Signal Classification}}, in \emph{Proc. of IEEE International Conference on Communications (ICC)}, June, Montreal, Canada. 2021.

\bibitem{Jagannath2021_arXiv}
A. Jagannath and J. Jagannath, \emph{{Multi-task Learning Approach for Automatic Modulation and Wireless Signal Classification}}, IEEE International Conference on Communications  (2021).

\bibitem{Kokkinos_2017_CVPR}
I. Kokkinos, \emph{Ubernet: Training a Universal Convolutional Neural Network for Low-, Mid-, and High-Level Vision Using Diverse Datasets and Limited Memory}, in \emph{Proceedings of the IEEE Conference on Computer Vision and Pattern Recognition (CVPR)}, July. 2017.

\bibitem{LIU2009_BCDmultitask}
H. Liu, M. Palatucci, and J. Zhang, \emph{Blockwise Coordinate Descent Procedures for the Multi-Task Lasso, with Applications to Neural Semantic Basis Discovery}, in \emph{Proceedings of the 26th Annual International Conference on Machine Learning}, New York, NY, USA. Association for Computing Machinery, ICML '09, 2009, p. 649–656. \urlprefix\url{https://doi.org/10.1145/1553374.1553458}.

\bibitem{vicentecms2022}
S. Liu and L. Vicente, \emph{{Accuracy and fairness trade-offs in machine learning: a stochastic multi-objective approach}}, Computational Management Science 19 (2022), pp. 513--537. \urlprefix\url{https://doi.org/10.1007/s10287-022-00425-z}.

\bibitem{vicentejota2023}
S. Liu and L. Vicente, \emph{{Convergence Rates of the Stochastic Alternating Algorithm for Bi-Objective Optimization}}, Journal of Optimization Theory and Applications 198 (2023), pp. 165--186. \urlprefix\url{https://doi.org/10.1007/s10957-023-02253-w}.

\bibitem{vicenteannor2021}
S. Liu and L. Vicente, \emph{{The stochastic multi-gradient algorithm for multi-objective optimization and its application to supervised machine learning}}, Annals of Operations Research 339 (2024), pp. 1119--1148.

\bibitem{Makhoul1999_precrecf1}
J. Makhoul, F. Kubala, R. Schwartz, and R. Weischedel, \emph{Performance Measures For Information Extraction}, in \emph{Proceedings of DARPA Broadcast News Workshop}. 1999.

\bibitem{Maninis2019}
K.K. Maninis, I. Radosavovic, and I. Kokkinos, \emph{Attentive Single-Tasking of Multiple Tasks}, in \emph{2019 IEEE/CVF Conference on Computer Vision and Pattern Recognition (CVPR)}. 2019, pp. 1851--1860.

\bibitem{mercierdesideri2018}
Q. Mercier, F. Poirion, and J. Désidéri, \emph{A stochastic multiple gradient descent algorithm}, European Journal of Operational Research 271 (2018), pp. 808--817. \urlprefix\url{https://www.sciencedirect.com/science/article/pii/S0377221718304831}.

\bibitem{MTL-bargain}
A. Navon, A. Shamsian, I. Achituve, H. Maron, K. Kawaguchi, G. Chechik, and E. Fetaya, \emph{{Multi-Task Learning as a Bargaining Game}}, in \emph{Proceedings of the 39th International Conference on Machine Learning}, Vol. 162. 2022.

\bibitem{PascalMichiardi2021_MTLalt}
L. Pascal, P. Michiardi, X. Bost, B. Huet, and M.A. Zuluaga, \emph{{Improved optimization strategies for deep Multi-Task Networks}}, arXiv preprint  (2021).

\bibitem{ROBBINS1971}
H. Robbins and D. Siegmund, \emph{{A Convergence Theorem for Non Negative Almost Supermartingales and some Applications}}, in \emph{Optimizing Methods in Statistics}, J.S. Rustagi, ed., Academic Press,  1971, pp. 233--257. \urlprefix\url{https://www.sciencedirect.com/science/article/pii/B9780126045505500158}.

\bibitem{Rumelhart1986_BACKPROP_Nature}
D.E. Rumelhart, G.E. Hinton, and R.J. Williams, \emph{{Learning representations by back-propagating errors}}, Nature 323 (1986), pp. 533--536. \urlprefix\url{http://www.nature.com/articles/323533a0}.

\bibitem{Strezoski_2019_ICCV}
G. Strezoski, N.v. Noord, and M. Worring, \emph{Many Task Learning With Task Routing}, in \emph{Proceedings of the IEEE/CVF International Conference on Computer Vision (ICCV)}, October. 2019.

\bibitem{multinet_2018}
M. Teichmann, M. Weber, M. Zollner, R. Cipolla, and R. Urtasun, \emph{Multinet: Real-time joint semantic reasoning for autonomous driving}  (2018). \urlprefix\url{https://www.repository.cam.ac.uk/handle/1810/279403}.

\bibitem{MTL-survey}
S. Vandenhende, S. Georgoulis, W. Van~Gansbeke, M. Proesmans, D. Dai, and L. Van~Gool, \emph{Multi-task learning for dense prediction tasks: A survey}, IEEE Transactions on Pattern Analysis and Machine Intelligence 44 (2022), pp. 3614--3633.

\bibitem{Wright2015}
S.J. Wright, \emph{{Coordinate descent algorithms}}, Mathematical Programming, Series B 151 (2015), pp. 3--34.

\bibitem{XuYin2015}
Y. Xu and W. Yin, \emph{{Block stochastic gradient iteration for convex and nonconvex optimization}}, SIAM Journal on Optimization 25 (2015), pp. 1686--1716.

\bibitem{MTL-survey1}
Y. Zhang and Q. Yang, \emph{A survey on multi-task learning}, IEEE Transactions on Knowledge and Data Engineering 34 (2022), pp. 5586--5609.

\end{thebibliography}


\appendix

\section{Implemented ATE-SG}\label{app:implATESG}
 The pseudo-code of the implemented ATE-SG method is given in \Cref{alg:practical_ATE}. For simplicity, we consider the total number of epochs as the only stopping criterion. 
 For a ready-to-use version of this algorithm, see \url{https://github.com/Fra0013To/ATEforMTNN}.
 
\begin{algorithm}\caption{Implemented ATE-SG}\label{alg:practical_ATE} 
{\small
\begin{description}

\item[Data:] $(\v{w}_0, \v{w}_{\rm ts})=\v{w}$ (initial guesses for the trainable parameters), $\mathcal{T}$ (training set), $B$ (mini-batch size), $\{\eta_i, ~i\ge 0\}$ (learning rates), $\ell$ (loss function),
$E_0,E_{\rm ts}$ (number of epochs for alternate training), $E$ (total number of epochs).

\item[Procedure:] \quad
    \begin{algorithmic}[1]
    \STATE $\v{w}^{(0)} \gets (\v{w}_0, \v{w}_{\rm ts})$ 
    \STATE $e \gets 0$ \quad (epochs counter, total)
    \STATE $i \gets 0$ \quad (iteration counter)
    \WHILE{$e\leq E$}
        
        \STATE $e_0\gets 0$ \quad (epochs counter, shared phase)
        \WHILE{$e_0\leq E_0$ and $e\leq E$ (alternate training, shared phase)}
            \STATE $\{\mathcal{B}_1,\ldots ,\mathcal{B}_t\}\gets$ random split of $\mathcal{T}$ into mini-batches w.r.t. $B$
            \FOR{\blue{$\tau=0,1,\ldots ,t-1$}}
                \STATE $\v{w}_0 \gets \v{w}_0 - \eta_i \nabla_{\v{w}_0}\ell(\mathcal{B}_\tau;\v{w}^{(i)})$
                 \STATE $i \gets i+1$
                \STATE $\v{w}^{(i)} \gets  (\v{w}_0, \v{w}_{\rm ts})$
            \ENDFOR
            \STATE $e_0 \gets e_0 + 1$ and
            $e \gets e + 1$
        \ENDWHILE
        \STATE $e_{\rm ts}\gets 0$ \quad (epochs counter, task-specific phase)
        \WHILE{$e_{\rm ts}\leq E_{\rm ts}$ and $e\leq E$ (alternate training, task-specific phase)}
            \STATE $\{\mathcal{B}_1,\ldots ,\mathcal{B}_t\}\gets$ random split of $\mathcal{T}$ into mini-batches w.r.t. $B$
            \FOR{\blue{$\tau=0,1,\ldots ,t-1$}}
                \STATE $\v{w}_{\rm ts} \gets \v{w}_{\rm ts} - \eta_i \nabla_{\v{w}_{\rm ts}}\ell(\mathcal{B}_\tau; \v{w}^{(i)})$
                \STATE $i \gets i+1$
                \STATE $\v{w}^{(i)} \gets  (\v{w}_0, \v{w}_{\rm ts})$
            \ENDFOR
            \STATE $e_{\rm ts} \gets e_{\rm ts} + 1$ and 
             $e \gets e + 1$
            
        \ENDWHILE
    \ENDWHILE
    \RETURN $\v{w}^{(i)}$ \quad (final MTNN's weights)
    \end{algorithmic}
\end{description}
}
\end{algorithm}

\end{document}